\pgfplotsset{
    tick label style={font=\small},
    label style={font=\small},
    legend style={font=\footnotesize},
    compat=1.18
}
\makeatletter\@ifpackageloaded{underscore}{}{\usepackage[strings]{underscore}}\makeatother
\newcommand{\CNT}[1]{\textsf{CNT}_{#1}}
\newcommand{\CF}{\textsf{CF}}
\newcommand{\SF}{\textsf{SF}}
\newtheorem{Definition}{\bf Definition}[section]
\newtheorem{Proposition}{\bf Proposition}[section]
\begin{document}

\title{Quantum-Like Contextuality in Large Language Models}

\author{Kin Ian Lo $^{1}$, Mehrnoosh Sadrzadeh $^{1}$ and Shane Mansfield $^{2}$}

\address{%
$^{1}$ \quad University College London; \{kin.lo.20,m.sadrzadeh\}@ucl.ac.uk\\
$^{2}$ \quad Quandela; shane.mansfield@quandela.com}

\subject{artificial Intelligence, quantum Computing}

\keywords{sheaves, large language models, quantum contextuality}

\corres{Mehrnoosh Sadrzadeh\\
\email{m.sadrzadeh@ucl.ac.uk}}

\begin{abstract}
Contextuality is a distinguishing feature of quantum mechanics and there is growing evidence that it is a necessary condition for quantum advantage. 
In order to make use of it, researchers have been asking whether similar phenomena arise in other domains. The answer has been yes, e.g. in behavioural sciences. However, one has to move to frameworks that take some degree of signalling into account. 
Two such frameworks exist: (1) a signalling-corrected sheaf theoretic model, and (2) the Contextuality-by-Default (CbD) framework.
This paper provides the first large scale experimental evidence for a yes answer in natural language. 
We construct a linguistic schema modelled over a contextual quantum scenario, instantiate it in the Simple English Wikipedia and extract probability distributions for the instances using the large language model BERT. 
This led to the discovery of 77,118 sheaf-contextual and 36,938,948 CbD contextual instances. 
We proved that the contextual instances came from semantically similar words, by deriving an equation between degrees of contextuality and Euclidean distances of BERT's embedding vectors. 
A regression model further reveals that Euclidean distance is indeed the best statistical predictor of contextuality. Our linguistic schema is a variant of the co-reference resolution  challenge. These results are an indication that  quantum methods may be advantageous in language tasks.  
\end{abstract}

\maketitle

\section{Introduction}


Contextuality is a statistical phenomenon that only occurs in quantum mechanics but not in classical physics.
Intuitive assumptions about the world, such as the assumption that the outcome of a measurement does not depend what other measurements are performed at the same time, are violated in quantum mechanics.
The study of measurement scenarios arising, e.g., from the EPR paradox~\cite{EPR} and Bell's theorem~\cite{Bell1966,Bell1964}, led quantum mechanics to be the first field of science to formally deal with the notion of contextuality.
Scientists argued that quantum theory should be contextual in order to be sound and different mathematical formalisms were introduced to analyse this. 
Quantum-like contextuality turned out to be, essentially, the impossibility of having a global explanation to a set of local observations on a system. 
Such impossibility only occurs in systems with incompatible observables, i.e.\ a simultaneous global observation of all observables are not possible.
Over the last number of years it has been proved that contextuality is a resource for lifting a linear computer to a universal classical computer~\cite{Anders2009, Raussendorf2013,
Mansfield2018} and that contextuality is necessary for successful magic state distillation
\cite{Howard2014}, a key component in fault-tolerant quantum computing schemes.
Based on these recent results, we can speculate that contextuality is, in a rough sense, a resource that possess extra computational power that is absent in non-contextual systems.
A natural conjecture is that simulating contextual systems is harder than simulating non-contextual systems and that quantum computers, which are contextual, are better at simulating contextual systems than classical computers.

There exist a number of different frameworks for the formal treatment of contextuality. 
The sheaf-theoretic framework of \cite{Abramsky2011} is amongst the ones that connects the statistical data collected from quantum experiments to the laws of physics.
One of these laws is the \emph{no-signalling} property. The sheaf-theoretic framework can only formalise contextual scenarios that are non-signalling. 
However, the examples we are aiming to study are highly unlikely to be non-signalling. We remedy this by using a recent extension of the framework that is designed to model experiments in the presence of experimental noise~\cite{Emeriau2022}. In this extension,  the authors derive an inequality to check the contextuality of systems of measurements with noisy data.  As a result, this extensions allows for presence of  some degree of signalling. We use this extension and its newly derived inequality to check the contextuality of our examples. 

Sheaves are well used structures for analysing and fusing data; they have been applied to signal processing \cite{robinson2017}, vision and gene expression \cite{Kvinge2021}, and social robotics \cite{murimi2021}; for a detailed study see \cite{curry2014sheaves,Hansen2019}. 
They have been used to detect logical inconsistencies in large language models \cite{huntsman2024prospects} and knowledge graph embeddings \cite{Gebhart2023}.
They have also been used to formalise the geometry of learning in graph neural networks \cite{Bodnar2023}. Away from deep learning, sheaves have been used to formalise the semantic consistency of anaphora resolution  
\cite{AbramskySadr2014}, to unify different aspects of semantic composition \cite{Philips2019}, and to model consistent text \cite{bradley2021}.

Quantum-like contextuality in the presence of signalling has also  been studied in a framework known as \emph{Contextuality-by-Default} (CbD). This framework was used to  probe contextuality  in behavioural sciences~\cite{Dzhafarov2015} and more recently also in natural language \cite{Wang2021a,Wang2021b}. In natural language, it was shown that  pairs of semantically ambiguous words can produce contextual systems that resemble the Bell-CHSH quantum measurement scenario.  Semantic ambiguity is not the only scenario that can produce contextual systems. In previous work~\cite{Lo2022,Lo2023}, we proposed a  linguistic construction that exhibits the contextuality of anaphoric ambiguities coming from a set of sentences with ambiguous pronouns in them. Neither of these previous works were large scale. The work done in semantic ambiguity, considered a dataset consisting of a few dozen of hand crafted 2-word phrases. The joint  probability distributions of the words within these phrases were also constructed manually, by eyeing the contexts surrounding them in a sample projection of a small corpus.  The dataset of the previous work on anaphoric ambiguities only had 11 pairs of nouns in it. Although the probability distributions were mined from the large language model BERT, the pairs themselves were chosen manually. As a result, these papers  offered anecdotal evidence on the quantum-like contextuality of natural language and only the latter of them indicated the presence of a connection with large language models. 

The purpose behind this paper is to offer  systematic large scale evidence for the presence of quantum-like contextuality in natural language. We work with a schema modelled over the minimal quantum contextual scenario. The schema has two nouns, three adjectives and two pronouns.  Each pronoun is modified by a pair of adjectives, making it refer meaningfully  to one of the two nouns.  We go through all the noun pairs that that are modified by the same triples of adjectives in the simple English Wikipedia corpus \cite{wikidump} and automatically  instantiating the schema using this data.   We then use  BERT to extract probability distributions for co-occurrences of the noun pairs in the context of the pronouns and adjectives. This provided us with 51,966,480 instances of the schema, of which 77,118  were contextual in the sheaf-theoretic  and 36,938,948 in the CbD framework.

The systematic and automatic nature of our data collection methodology led to a plenitude of examples, many more than the ones discovered in previous work, by  Wang et al.\!~\cite{Wang2021b},  and  by ourselves~\cite{Lo2022,Lo2023}.  This enabled  us to study what makes an instance of the schema more likely to be contextual. We took two steps to further this study. First, we derived an equation between a parameter relating to contextuality and the Euclidean distance between the BERT prediction vectors of the noun pairs used in an instance of the schema.  Next, we  considered a set of features of the probability distributions provided by BERT, including Euclidean distance, as well as the information theoretic content of the instances as computed by entropy and trained a polynomial regression model over them. Amongst these features,  Euclidean distances demonstrated statistically significant correlations  with both notions of sheaf theoretic and CbD contextuality. Euclidean distance is closely related to  semantic similarity, which depends on co-occurrence in the same linguistic context. In order to confirm its correlation with quantum-like contextuality, we restricted the noun pairs of the dataset to those that had a high semantic similarity. This increased the percentage of the contextual instances, as measured in this controlled subset of the dataset versus in the whole dataset. This showed that the number of contextual examples rises as we increase the threshold over which semantic similarity is calculated.   These results demonstrate that finding a correlation between context-sensitive features of  large language models and quantum contextuality is beyond co-incidence.  Our main future aim is to use this overlap to investigate how  quantum-like advantage may  result  from this situation and how it can lead to more efficient large language algorithms.


\section{Background on Ambiguities in Natural Language}

Natural language is ambiguous at the different levels of syntax, semantics and pragmatics. Semantic ambiguities result from the fact that words  have different meanings.   For example, ``bat'' has an animal meaning and a sports meaning,  ``plant'' can mean a living organism such as a tree or a shrub, or a manufacturing industrial unit, such as a power plant. 
\emph{Word Sense Disambiguation} is a long standing task and evaluation method in Natural Language Processing.  Here the goal is to identify which meaning  of a word is being used in a context. Syntactic ambiguities arise from the fact that words can have different grammatical roles, e.g.  ``plant'' can be a noun or a verb. They also arise in situations where a grammatical compound attaches itself to other compounds using different roles. An example here is the preposition phrase ``in pyjamas''  in the sentence ``I saw a man in pyjamas'', where it is not clear whether it is modifying the subject ``I'' or the object ``a man''. 
Another set of  major ambiguities arise in pragmatics, where a key challenge is the \emph{coreference resolution} task. This type of ambiguity happens   when one has to decide which part of the discourse  is being referred by another expression in its context. An example of this type of ambiguity is visible on the discourse ``I have a Toyota RAV4 and a Toyota Aqua, it is the hybrid one for which I need an MOT today.'', where the pronoun ``it'' can in principle refer to either car. Resolving these ambiguities is an important part of language engines such as dialogue systems and  question answering. For instance, in an automatic MOT booking  system, the interfacing language engine should some how (in this case using external knowledge or a database of facts) deduce  that the``it''  in the  previous example sentence is referring to ``Toyota Aqua'' and not ``Toyota RAV4''.

Different  coreference resolution algorithms focus on different classes of referring expressions. Pronouns are in the class of definite referents and refer to entities that are identifiable from the context, because they have been mentioned before (or after). In the  discourse `Dawn called the AA. The car had broken down and she had no choice',  the pronoun `she' refers to the definite noun phrase `Dawn' and is an instance of the linguistic phenomena  \emph{anaphora}. Despite presence of linguistics agreement in the anaphoric relations, such as gender and number, ambiguities are common. In ``Dawn texted Wendy. Her car had broken down.'',  or ``Dawn phoned Wendy. She was upset and needed help.'', it  is not clear whose car was broken or who was upset.   Pronominal coreference relations are many-to-many and the ambiguities arising from them take complex forms.  A  pronoun can refer to multiple referents and multiple pronouns can refer to the same referent. In the  discourse ``There is a man carrying a boy. He is tired and worn out. He is crying. He is quiet. They are moving fast.'', the first \emph{He} can refer to both \emph{man} or \emph{boy}, but the second \emph{He} most probably refers to \emph{a boy} and the third \emph{He} to \emph{a man}.

The different choices that give rise to ambiguities, be it in the choice of the meaning of a word in Word Sense Disambiguation task, or the choice of the potential grammatical structure of a sentence or the referent of an expression in coreference resolution, give rise to probability distributions. 
An ambiguous word can be treated as an observable which can have possible outcomes. In case of meaning ambiguities, these outcomes are possibilities over the semantic interpretations of the word.
A probability distribution over the outcomes can then be defined using the frequency of occurrences of the possible interpretations in a corpus, e.g. the  English Wikipedia corpus that we have used in this paper (although mining the plausibility judgements from human subjects is also possible, see previous work \cite{Wang2021a,Wang2021b,Lo2022,Lo2023}).  A single observable is not sufficient to support contextuality, instead pairs of ambiguous words are needed. A pair of words is thought of as a pair of compatible observables measured simultaneously. 

The work of \cite{Wang2021a,Wang2021b} focused on meaning ambiguities.  In this paper, we focus on coreference ambiguities and model contextual features of   ambiguities  arising from anaphoric  reference relations. An  identical approach can be taken if the relationship is cataphoric. We treat the pronouns as observables, of which the measurement outcomes are the possible referents of each pronoun. In what follows we describe the mathematical setting we used, detail how to use it to model ambiguous anaphoric references,  explain how we found contextual examples, and present some of the contextual examples.

\section{Background on the sheaf-theoretic framework for contextuality}
\label{sec:sheafframework}
In the sheaf-theoretic framework of contextuality \cite{Abramsky2011}, a measurement scenario is a tuple $\langle \mathcal{X}, \mathcal{M}, \mathcal{O} \rangle$ with the data: $\cal X$, a set of observables; $\cal M$, a measurement cover; and $\cal O$, a set of measurement outcomes. 
An observable in $\mathcal{X}$ is a quantity that can be measured to give one of the  outcomes in $ \mathcal{O} $. 
A subset of simultaneously measurable observables of $\mathcal{X}$ is called a measurement context (or simply context).  
The measurement cover $\mathcal{M}$ is a collection of contexts which covers $\mathcal{X}$, i.e.\ the union of all contexts in $\mathcal{M}$ is $ \mathcal{X}$.

For every measurement context, we can perform a number of repeated simultaneous measurements on the observables in the context. The gathered statistics can then be used to reconstruct an estimated joint probability distribution. 
One can also instead calculate the joint distribution exactly using an underlying theory of the concerned system, e.g.\ using Born's rule in quantum mechanics for a quantum system.

An \emph{empirical model} refers to a collection of such joint probability distributions for each context in the measurement cover $\mathcal{M}$.
By definition, a subset of observables in $\mathcal{X}$ that are not all included in a measurement context in $\mathcal{M}$ cannot be measured simultaneously. 
Therefore, a joint distribution over the said observables cannot be empirically estimated. 
The empirical model of a system fully encapsulates what is to be known from the system with empirical measurements. 

Contextuality comes from the failure of explaining an empirical model in a classically intuitive way: assuming that all measurements are just revealing deterministic pre-existing values, in other words, the measurement outcomes are already fixed when the system was prepared. 
Thus the randomness comes entirely from the system preparation. That means that there is a global joint distribution over all the observables in the scenario, which marginalises to every local joint distribution in the empirical model. 
Given an empirical model, if such a global distribution does not exist, then we call such empirical model contextual.
Note that such global distribution exists only in theory as there are observables in $ \mathcal{X}$ that cannot be measured simultaneously, unless in trivial scenarios. 

For readers familiar with sheaf theory, the said criterion for contextuality can be formalised using the language of sheaves. 
Consider the presheaf $ \mathcal{F}$ which assigns each subset $U \in \mathcal{P}(\mathcal{X})$ the set of all possible probability distributions on the observables in $U$. 
Each set inclusion $U \subseteq U'$, interpreted as an arrow in the category $\mathcal{P}(\mathcal{X})$, is mapped to the marginalisation of distributions on $U'$ to distributions on  $U$. 
For a measurement cover $ \mathcal{M}$, an empirical model is a family of pair-wise compatible distributions $ \{D_C\}_{C \in \mathcal{M}} $.
A necessary condition for a presheaf $\mathcal{F}$ to be a sheaf is the gluing property: 
\begin{quote}
    If $\mathcal{M} = \{C_i\}_i$ is a cover of $ \mathcal{U} \subseteq \mathcal{X}$, and $\{D_C\}_{C\in \mathcal{M}}$ is a family of pair-wise compatible sections, then there exists a global distribution $D$ on $ \mathcal{U}$ that marginalises to every $D_C$.
\end{quote}
Since a contextual empirical model on a measurement scenario entails that there is no such global distribution, the presheaf $\mathcal{F}$ is not a sheaf.
In other words, a contextual empirical model can only live on a measurement scenario for which the presheaf $\mathcal{F}$ is not a sheaf.

As an example, the Bell-CHSH scenario involves two experimenters, Alice and Bob, who share between them a two-qubit quantum state. 
Alice is allowed to measure her part of the state with one of two incompatible observables, $ a_1$ and $ a_2$, which gives either $0$ or $1$ as the outcome.
Similarly Bob can choose to measure his part with observables $ b_1$ and $ b_2$.
Therefore, the Bell-CHSH measurement scenario is fully described with the following data: $\mathcal{X} = \{a_1, b_1, a_2, b_2\}$,  $\mathcal{M} = \{\{a_1,b_1\}, \{a_1, b_2\}, \{a_2,b_1\}, \{a_2, b_2\} \}$, and $ \mathcal{O} = \{0, 1\} $.  Notice that $\{a_1, a_2\} $ and $\{b_1, b_2\} $ are not in $ \mathcal{M}$ as they cannot be measured simultaneously due to their quantum mechanical incompatibility.

So far we have specified what measurements are allowed and what outcomes are possible.
Suppose now Alice and Bob repeat the experiment many times and have gathered sufficient statistics to estimate the joint probability distributions for each context in $ \mathcal{M}$. Their results can be summarised in a table referred to as an \emph{empirical table}, see Figure \ref{fig:emptables}, where each row in the table represent a joint distribution on the context shown in the leftmost column. 
For instance, the bottom right entry in the table ($1 / 8$) is the probability of both Alice and Bob getting 1 as their measurement outcomes when Alice chooses to measure $ a_2$ and Bob chooses to measure $ b_2$.
Note that the empirical model of the system is entirely described by the empirical table.

\begin{figure}
\begin{center}
\begin{tabular}{r|ccccc}
 & $(0, 0)$ & $(0, 1)$ & $(1, 0)$ & $(1, 1)$   \\ \hline
$(a_1, b_1)$ & $1 / 2$ & $0$ & $0$ & $1 / 2$  \\
$(a_1, b_2)$ & $3 / 8$ & $1 / 2$ & $1 / 2$ & $3 / 8$  \\
$(a_2, b_1)$ & $3 / 8$ & $1 / 8$ & $1 / 8$ & $3 / 8$  \\
$(a_2, b_2)$ & $1 / 8$ & $3 / 8$ & $3 / 8$ & $1 / 8$  \\
\end{tabular}
\qquad
\begin{tabular}{r|ccccc}
 & $(0, 0)$ & $(0, 1)$ & $(1, 0)$ & $(1, 1)$   \\ \hline
$(a_1, b_1)$ & $1$ & $0$ & $0$ & $1$  \\
$(a_1, b_2)$ & $1$ & $1$ & $1$ & $1$  \\
$(a_2, b_1)$ & $1$ & $1$ & $1$ & $1$  \\
$(a_2, b_2)$ & $1$ & $1$ & $1$ & $1$  \\
\end{tabular}
\end{center}
\caption{Empirical tables of measurement scenarios:  Bell-CHSH (left), possibilistic Bell-CHSH  (right)}
\label{fig:emptables}
\end{figure}

One can show that, using elementary linear algebra, there exists no global distribution over $ \{a_1, a_2, b_1, b_2\} $ that marginalises to the 4 local distribution shown in the above empirical table.
Therefore, the empirical model considered here is indeed contextual.

Instead of probability, one can also consider possibility, i.e.\ whether an outcome is possible or not. 
If we use Boolean values to represent possibility, 0 for \emph{impossible} and 1 for \emph{possible}, the passage from probability to possibility is just a mapping of all zero probabilities to 0 and all non-zero probabilities to 1.
This (irreversible) mapping is called a \emph{possibilistic collapse} of the model.  For the empirical table of the possibilistic version of  Bell-CHSH see Figure \ref{fig:emptables}.  One can  visualise a possibilistic model with a bundle diagram, see Figure \ref{fig:bundles}:

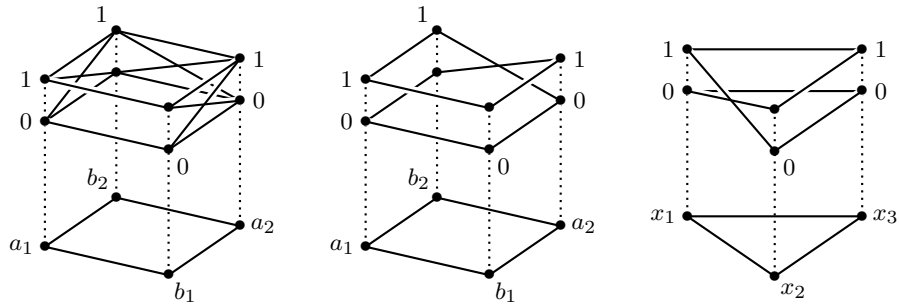
\begin{figure}
\begin{center}
\begin{tikzpicture}[x=45pt,y=45pt,thick,label distance=-0.25em,baseline=(O.base), scale=0.7]
\coordinate (O) at (0,0);
\coordinate (T) at (0,1.5);
\coordinate (u) at (0,0.5);
\coordinate [inner sep=0em] (v0) at ($ ({-cos(1*pi/12 r)*1.2},{-sin(1*pi/12 r)*0.48}) $);
\coordinate [inner sep=0em] (v1) at ($ ({-cos(7*pi/12 r)*1.2},{-sin(7*pi/12 r)*0.48}) $);
\coordinate [inner sep=0em] (v2) at ($ ({-cos(13*pi/12 r)*1.2},{-sin(13*pi/12 r)*0.48}) $);
\coordinate [inner sep=0em] (v3) at ($ ({-cos(19*pi/12 r)*1.2},{-sin(19*pi/12 r)*0.48}) $);
\coordinate [inner sep=0em] (v0-0) at ($ (v0) + (T) $);
\coordinate [inner sep=0em] (v0-1) at ($ (v0-0) + (u) $);
\coordinate [inner sep=0em] (v1-0) at ($ (v1) + (T) $);
\coordinate [inner sep=0em] (v1-1) at ($ (v1-0) + (u) $);
\coordinate [inner sep=0em] (v2-0) at ($ (v2) + (T) $);
\coordinate [inner sep=0em] (v2-1) at ($ (v2-0) + (u) $);
\coordinate [inner sep=0em] (v3-0) at ($ (v3) + (T) $);
\coordinate [inner sep=0em] (v3-1) at ($ (v3-0) + (u) $);
\draw (v0) -- (v1) -- (v2) -- (v3) -- (v0);
\draw [dotted] (v0-1) -- (v0);
\draw [dotted] (v1-1) -- (v1);
\draw [dotted] (v2-1) -- (v2);
\draw [dotted] (v3-1) -- (v3);
\node [inner sep=0.1em] (v0') at (v0) {$\bullet$};
\node [anchor=east,inner sep=0em] at (v0'.west) {$a_1$};
\node [inner sep=0.1em,label={[label distance=-0.625em]330:{$b_1$}}] at (v1) {$\bullet$};
\node [inner sep=0.1em] (v2') at (v2) {$\bullet$};
\node [anchor=west,inner sep=0em] at (v2'.east) {$a_2$};
\node [inner sep=0.1em,label={[label distance=-0.5em]175:{$b_2$}}] at (v3) {$\bullet$};

\draw [line width=3.2pt,white] (v3-0) -- (v0-0);
\draw [line width=3.2pt,white] (v3-0) -- (v0-1);
\draw [line width=3.2pt,white] (v3-1) -- (v0-0);
\draw [line width=3.2pt,white] (v3-1) -- (v0-1);
\draw (v3-0) -- (v0-0);
\draw (v3-0) -- (v0-1);
\draw (v3-1) -- (v0-0);
\draw (v3-1) -- (v0-1);

\draw [line width=3.2pt,white] (v2-0) -- (v3-0);
\draw [line width=3.2pt,white] (v2-0) -- (v3-1);
\draw [line width=3.2pt,white] (v2-1) -- (v3-0);
\draw [line width=3.2pt,white] (v2-1) -- (v3-1);
\draw (v2-0) -- (v3-0);
\draw (v2-0) -- (v3-1);
\draw (v2-1) -- (v3-0);
\draw (v2-1) -- (v3-1);

\draw [line width=3.2pt,white] (v0-0) -- (v1-0);
\draw [line width=3.2pt,white] (v0-1) -- (v1-1);
\draw (v0-0) -- (v1-0);
\draw (v0-1) -- (v1-1);

\draw [line width=3.2pt,white] (v1-0) -- (v2-0);
\draw [line width=3.2pt,white] (v1-0) -- (v2-1);
\draw [line width=3.2pt,white] (v1-1) -- (v2-0);
\draw [line width=3.2pt,white] (v1-1) -- (v2-1);
\draw (v1-0) -- (v2-0);
\draw (v1-0) -- (v2-1);
\draw (v1-1) -- (v2-0);
\draw (v1-1) -- (v2-1);

\node [inner sep=0.1em,label=left:{$0$}] at (v0-0) {$\bullet$};
\node [inner sep=0.1em,label=left:{$1$}] at (v0-1) {$\bullet$};
\node [inner sep=0.1em,label={[label distance=-0.5em]330:{$0$}}] at (v1-0) {$\bullet$};
\node [inner sep=0.1em] at (v1-1) {$\bullet$};
\node [inner sep=0.1em,label=right:{$0$}] at (v2-0) {$\bullet$};
\node [inner sep=0.1em,label=right:{$1$}] at (v2-1) {$\bullet$};
\node [inner sep=0.1em] at (v3-0) {$\bullet$};
\node [inner sep=0.1em,label={[label distance=-0.5em]150:{$1$}}] at (v3-1) {$\bullet$};
\end{tikzpicture}
\qquad
\begin{tikzpicture}[x=45pt,y=45pt,thick,label distance=-0.25em,baseline=(O.base), scale=0.7]
\coordinate (O) at (0,0);
\coordinate (T) at (0,1.5);
\coordinate (u) at (0,0.5);
\coordinate [inner sep=0em] (v0) at ($ ({-cos(1*pi/12 r)*1.2},{-sin(1*pi/12 r)*0.48}) $);
\coordinate [inner sep=0em] (v1) at ($ ({-cos(7*pi/12 r)*1.2},{-sin(7*pi/12 r)*0.48}) $);
\coordinate [inner sep=0em] (v2) at ($ ({-cos(13*pi/12 r)*1.2},{-sin(13*pi/12 r)*0.48}) $);
\coordinate [inner sep=0em] (v3) at ($ ({-cos(19*pi/12 r)*1.2},{-sin(19*pi/12 r)*0.48}) $);
\coordinate [inner sep=0em] (v0-0) at ($ (v0) + (T) $);
\coordinate [inner sep=0em] (v0-1) at ($ (v0-0) + (u) $);
\coordinate [inner sep=0em] (v1-0) at ($ (v1) + (T) $);
\coordinate [inner sep=0em] (v1-1) at ($ (v1-0) + (u) $);
\coordinate [inner sep=0em] (v2-0) at ($ (v2) + (T) $);
\coordinate [inner sep=0em] (v2-1) at ($ (v2-0) + (u) $);
\coordinate [inner sep=0em] (v3-0) at ($ (v3) + (T) $);
\coordinate [inner sep=0em] (v3-1) at ($ (v3-0) + (u) $);
\draw (v0) -- (v1) -- (v2) -- (v3) -- (v0);
\draw [dotted] (v0-1) -- (v0);
\draw [dotted] (v1-1) -- (v1);
\draw [dotted] (v2-1) -- (v2);
\draw [dotted] (v3-1) -- (v3);
\node [inner sep=0.1em] (v0') at (v0) {$\bullet$};
\node [anchor=east,inner sep=0em] at (v0'.west) {$a_1$};
\node [inner sep=0.1em,label={[label distance=-0.625em]330:{$b_1$}}] at (v1) {$\bullet$};
\node [inner sep=0.1em] (v2') at (v2) {$\bullet$};
\node [anchor=west,inner sep=0em] at (v2'.east) {$a_2$};
\node [inner sep=0.1em,label={[label distance=-0.5em]175:{$b_2$}}] at (v3) {$\bullet$};

\draw [line width=3.2pt,white] (v3-0) -- (v0-0);
\draw [line width=3.2pt,white] (v3-1) -- (v0-1);
\draw (v3-0) -- (v0-0);
\draw (v3-1) -- (v0-1);

\draw [line width=3.2pt,white] (v2-0) -- (v3-1);
\draw [line width=3.2pt,white] (v2-1) -- (v3-0);
\draw (v2-0) -- (v3-1);
\draw (v2-1) -- (v3-0);

\draw [line width=3.2pt,white] (v0-0) -- (v1-0);
\draw [line width=3.2pt,white] (v0-1) -- (v1-1);
\draw (v0-0) -- (v1-0);
\draw (v0-1) -- (v1-1);

\draw [line width=3.2pt,white] (v1-0) -- (v2-0);
\draw [line width=3.2pt,white] (v1-1) -- (v2-1);
\draw (v1-0) -- (v2-0);
\draw (v1-1) -- (v2-1);

\node [inner sep=0.1em,label=left:{$0$}] at (v0-0) {$\bullet$};
\node [inner sep=0.1em,label=left:{$1$}] at (v0-1) {$\bullet$};
\node [inner sep=0.1em,label={[label distance=-0.5em]330:{$0$}}] at (v1-0) {$\bullet$};
\node [inner sep=0.1em] at (v1-1) {$\bullet$};
\node [inner sep=0.1em,label=right:{$0$}] at (v2-0) {$\bullet$};
\node [inner sep=0.1em,label=right:{$1$}] at (v2-1) {$\bullet$};
\node [inner sep=0.1em] at (v3-0) {$\bullet$};
\node [inner sep=0.1em,label={[label distance=-0.5em]150:{$1$}}] at (v3-1) {$\bullet$};
\end{tikzpicture}
\qquad
\begin{tikzpicture}[x=45pt,y=45pt,thick,label distance=-0.25em,baseline=(O.base), scale=0.7]
\coordinate (O) at (0,0);
\coordinate (T) at (0,1.5);
\coordinate (u) at (0,0.5);
\coordinate [inner sep=0em] (v0) at ($ ({-cos(-2*pi/12 r)*1.2},{-sin(-2*pi/12 r)*0.48}) $);
\coordinate [inner sep=0em] (v1) at ($ ({-cos(6*pi/12 r)*1.2},{-sin(6*pi/12 r)*0.48}) $);
\coordinate [inner sep=0em] (v2) at ($ ({-cos(14*pi/12 r)*1.2},{-sin(14*pi/12 r)*0.48}) $);
\coordinate [inner sep=0em] (v0-0) at ($ (v0) + (T) $);
\coordinate [inner sep=0em] (v0-1) at ($ (v0-0) + (u) $);
\coordinate [inner sep=0em] (v1-0) at ($ (v1) + (T) $);
\coordinate [inner sep=0em] (v1-1) at ($ (v1-0) + (u) $);
\coordinate [inner sep=0em] (v2-0) at ($ (v2) + (T) $);
\coordinate [inner sep=0em] (v2-1) at ($ (v2-0) + (u) $);
\draw (v0) -- (v1) -- (v2) -- (v0);
\draw [dotted] (v0-1) -- (v0);
\draw [dotted] (v1-1) -- (v1);
\draw [dotted] (v2-1) -- (v2);

\node [inner sep=0.1em] (v0') at (v0) {$\bullet$};
\node [anchor=east,inner sep=0em] at (v0'.west) {$x_1$};
\node [inner sep=0.1em,label={[label distance=-0.625em]330:{$x_2$}}] at (v1) {$\bullet$};
\node [inner sep=0.1em,label={[label distance=-0.0em]0:{}}] (v2') at (v2) {$\bullet$};
\node [anchor=west,inner sep=0em] at (v2'.east) {$x_3$};

\draw [line width=3.2pt,white] (v0-1) -- (v2-1);
\draw [line width=3.2pt,white] (v0-0) -- (v2-0);
\draw (v0-1) -- (v2-1);
\draw (v0-0) -- (v2-0);

\draw [line width=3.2pt,white] (v2-1) -- (v1-1);
\draw [line width=3.2pt,white] (v2-0) -- (v1-0);
\draw (v2-1) -- (v1-1);
\draw (v2-0) -- (v1-0);

\draw [line width=3.2pt,white] (v1-1) -- (v0-0);
\draw [line width=3.2pt,white] (v1-0) -- (v0-1);
\draw (v1-1) -- (v0-0);
\draw (v1-0) -- (v0-1);

\node [inner sep=0.1em,label=left:{$0$}] at (v0-0) {$\bullet$};
\node [inner sep=0.1em,label=left:{$1$}] at (v0-1) {$\bullet$};
\node [inner sep=0.1em,label={[label distance=-0.5em]330:{$0$}}] at (v1-0) {$\bullet$};
\node [inner sep=0.1em] at (v1-1) {$\bullet$};
\node [inner sep=0.1em,label=right:{$0$}] at (v2-0) {$\bullet$};
\node [inner sep=0.1em,label=right:{$1$}] at (v2-1) {$\bullet$};
\end{tikzpicture}
\end{center}
\caption{Bundle diagrams of possibilistic empirical models: (left) the Bell-CHSH test; (middle) a PR box; (right) a PR prism.}
\label{fig:bundles}
\end{figure}
The base of the bundle diagram represents the measurement cover $\mathcal{M}$. 
An edge is drawn between two observables if they can be simultaneously measured, i.e.\ in the same measurement context. 
What sits on top of the base represents the possible outcomes. For instance, the presence of the edge connecting the 0 vertex on top of $ a_1$ and the 0 vertex on the observable $ b_1$ means that it is possible to get the joint outcome $(0, 0)$ when the context  $(a_1, b_1) $ is measured.

A system is logically contextual if the inexistence of a global distribution can already be deduced by looking at the supports of the context-wise distributions -- or
equivalently if the Boolean distributions obtained by the \emph{possibilistic
collapse} of the model \cite{Abramsky2011} is contextual. Such systems are said to be \emph{possibilistically contextual}.  
Logical contextuality manifests on a bundle diagram as the failure of extending every edge to a univocal loop (a loop that wrap around the base once).  For the possibilistic empirical model of a PR box, see Figure \ref{fig:bundles}. 

Note that none of the edges is extendable to a loop that wraps around the base once. Given the possibilistic collapse of an empirical model, if none of the edges can be extendable to a loop that wraps around the base once, we say that the model is \emph{strongly contextual}. \footnote{Strictly speaking, this definition of strong contextuality only applies to cyclic scenarios where the base of the bundle diagram forms a loop. Nonetheless, cyclic scenarios are the only scenarios considered in this paper.}

\begin{Definition}[Minimally contextual scenario]
    A measurement scenario $\langle X, M, O \rangle$ is called \emph{minimally contextual} if
    \begin{enumerate}
        \item it admits a contextual empirical model, and
        \item removing any one of the observables, contexts or outcomes would make the scenario lose the ability to admit a contextual empirical model.
    \end{enumerate}
\end{Definition}

\begin{Definition}[$k$-cyclic scenario]
    A $k$-cyclic scenario is a measurement scenario $\langle X, M, O\rangle$ such that $|X| = |M| = k$ and $\forall c \in M, |c| = 2$.
    The observables and contexts of a $k$-cyclic scenario can always be written as 
    \begin{enumerate}
        \item $X = \{x_1, x_2, \dots, x_k\}$,
        \item $M = \{\{x_1, x_2\}, \{x_2, x_3\}, \dots, \{x_k, x_1\}\}$.
    \end{enumerate}
\end{Definition}

\begin{Proposition}
The 3-cyclic scenario with binary outcomes is minimally contextual.
\end{Proposition}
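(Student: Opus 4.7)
The plan is to prove the statement in two parts, matching the two clauses of the definition. First I will exhibit a specific contextual empirical model on the 3-cyclic binary scenario, and second I will check that the removal of any single observable, context, or outcome destroys this property.

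For the existence clause, I will use the PR prism already drawn on the right of Figure \ref{fig:bundles}. Concretely, take the possibilistic model in which the context $\{x_1,x_2\}$ supports only $(0,0)$ and $(1,1)$, the context $\{x_2,x_3\}$ supports only $(0,0)$ and $(1,1)$, while $\{x_3,x_1\}$ supports only $(0,1)$ and $(1,0)$; one may realise this with the uniform distribution on each support. Any global section $(v_1,v_2,v_3)\in\{0,1\}^3$ would have to satisfy $v_1=v_2$, $v_2=v_3$ and $v_1\neq v_3$ simultaneously, which is impossible, so the model is strongly (and hence) contextual.

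For the minimality clause I proceed by a short case analysis. If we delete an outcome so that $|\mathcal{O}|=1$, every distribution is a point mass and the product point mass is a global section, so no model can be contextual. If we delete an observable, the cover collapses to (at most) a single two-element context, and its own distribution is the required global section. If we delete one of the three contexts, the remaining cover is a path $\{x_i,x_j\},\{x_j,x_k\}$ whose nerve is acyclic; by Vorob'ev's theorem, equivalently by the explicit gluing
\begin{equation*}
p(x_i,x_j,x_k) \;=\; p_{ij}(x_i,x_j)\,\frac{p_{jk}(x_j,x_k)}{p_j(x_j)},
\end{equation*}
any pair of distributions agreeing on the shared marginal $p_j$ extends to a global distribution, so again no model on the reduced scenario can be contextual.

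The only genuinely delicate step is the context-removal case, because one must handle the edge case where the shared marginal vanishes on some value of $x_j$; there one simply defines the conditional arbitrarily on the null fibre, which does not affect the marginals. Once this is dispatched, both clauses of the definition of minimal contextuality hold and the proposition follows.
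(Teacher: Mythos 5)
Your proof is correct and follows essentially the same route as the paper: exhibit the PR prism as a (strongly) contextual empirical model, then do a case analysis showing that removing an observable, a context, or an outcome yields a scenario that admits only non-contextual models. The only differences are matters of detail: you verify the PR prism's contextuality explicitly via the parity argument and you carry out the acyclic-cover gluing (Vorob'ev-style, with the null-fibre caveat) by hand, where the paper simply asserts the former and cites the literature for the latter.
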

\begin{proof}
    The 3-cyclic scenario has the following data:
    \begin{enumerate}
        \item $\mathcal{X} = \{x_1, x_2, x_3\}$,
        \item $\mathcal{M} = \{\{x_1, x_2\}, \{x_2, x_3\},  \{x_1, x_3\}\}$,
        \item $\mathcal{O} = \{0,1\}$.
    \end{enumerate}
    The PR prism is a contextual empirical model for this scenario.
    For minimality:
    \begin{enumerate}
        \item Removing any one of the observables would render two of the contexts to be trivial and thus the scenario would lose the ability to admit a contextual empirical model.
        \item Removing a context would render the scenario to be non-cyclic and thus the scenario would lose the ability to admit a contextual empirical model~\cite{soares2014}.
        \item Removing an outcome of one of the observables would render that observable to be trivial which is equivalent to the removal of an observable.
    \end{enumerate}
\end{proof}

\begin{Proposition} The only strongly contextual system, up to relabelling, for the minimal measurement scenario is where perfect correlation is observed on two of the contexts and perfect anti-correlation is observed on the other one. 
\end{Proposition}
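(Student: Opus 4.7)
The plan is to characterise strongly contextual possibilistic empirical models on the 3-cyclic binary scenario in two steps: first, the no-signalling condition combined with a counting argument forces every context support to have size exactly two; second, a parity argument on the resulting cyclic XOR system identifies the strongly contextual configurations up to outcome-relabelling.

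Denote the three context supports by $S_{12}, S_{23}, S_{13} \subseteq \{0,1\}^2$. Strong contextuality asserts that no global assignment $v = (v_1, v_2, v_3) \in \{0,1\}^3$ satisfies $(v_i, v_j) \in S_{\{i,j\}}$ for all three contexts simultaneously. Working under no-signalling, each observable $x_i$ has a well-defined marginal support $m_i \subseteq \{0,1\}$. I would first rule out deterministic observables: if some $m_i$ were a singleton, the scenario would collapse to a 2-cyclic problem on the remaining observables, which is never strongly contextual because its bundle diagram trivially admits a univocal loop. So $m_i = \{0,1\}$ for every $i$, which forces $|S_{ij}| \geq 2$. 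A counting argument then bounds the total support size from above: each pair absent from $S_{ij}$ excludes exactly two of the eight global assignments, so $\sum_{ij}(4-|S_{ij}|)\geq 4$, i.e., $\sum_{ij}|S_{ij}| \leq 8$. The feasible size-profiles are therefore $(2,2,2), (2,2,3), (2,2,4)$, and $(2,3,3)$. A short case analysis eliminates the last three profiles by constructing, in each case, an explicit consistent $v$ using the slack in the size-$\geq 3$ support. Hence every strongly contextual model has $|S_{ij}| = 2$ for every context, and with full marginals the only options are $S_{ij} = \{00,11\}$ (perfect correlation, PC) or $S_{ij} = \{01,10\}$ (perfect anti-correlation, PAC).

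Encoding each context by $\chi_{ij} \in \{0,1\}$ that equals $0$ for PC and $1$ for PAC, the consistency constraints become the linear system $v_i \oplus v_j = \chi_{ij}$, and summing around the 3-cycle gives $0 = \chi_{12} \oplus \chi_{23} \oplus \chi_{13}$, so the system is unsatisfiable exactly when an odd number of contexts are anti-correlated. The outcome-relabelling group, generated by swapping $0 \leftrightarrow 1$ on a single observable, flips the two $\chi$'s whose contexts contain that observable; this preserves the overall parity and acts transitively on the four odd-parity triples. Consequently every strongly contextual model is equivalent, under outcome-relabelling, to the canonical two-PC-and-one-PAC configuration, establishing the proposition. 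The most delicate step will be the case analysis eliminating the size-profiles containing some $|S_{ij}| \geq 3$: although intuitively obvious (larger supports leave more room for consistent extensions), it requires careful bookkeeping because no-signalling subtly couples the admissible configurations of the remaining two supports.
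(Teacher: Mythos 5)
The paper states this proposition without giving any proof, so there is nothing to compare your argument against; judged on its own, your route is sound and would fill that gap. Two remarks. First, the no-signalling assumption you make at the possibilistic level (well-defined marginal supports $m_i$) is genuinely needed for the statement to be true as written -- without it, e.g.\ the supports $S_{12}=\{00\}$, $S_{23}=\{11\}$, $S_{31}=\{00\}$ admit no global section yet are not of PR-prism form -- so making it explicit is the right move, and it matches the intended reading since all models in the paper have PR support with full marginals. Second, the step you flag as delicate is in fact painless and does not suffer from the coupling you worry about: once the two size-$2$ supports are forced to be $\{00,11\}$ or $\{01,10\}$ (the only two-element sets with full marginals), they act as XOR constraints $v_i\oplus v_j=\chi_{ij}$, so the candidate global assignments are parametrised by the remaining free bits (two candidates in the $(2,2,3)$ and $(2,2,4)$ profiles, four in the $(2,3,3)$ profile), and each missing pair of a size-$\geq 3$ support kills exactly one candidate; since the number of missing pairs in those supports is strictly smaller than the number of candidates, a global section always survives. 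With that filled in, the counting bound $\sum_{ij}|S_{ij}|\leq 8$, the elimination of deterministic observables, the parity criterion $\chi_{12}\oplus\chi_{23}\oplus\chi_{31}=1$, and the transitivity of single-observable outcome flips on the four odd-parity triples together give exactly the claimed characterisation, including the converse direction that the two-correlation/one-anticorrelation configuration is indeed strongly contextual.
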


We call this scenario the \emph{PR prism} as an analogy to the PR box, see Figure \ref{fig:bundles} for its bundle diagram. The pairs of parallel edges over contexts $\{x_2, x_3\} $ and $\{x_3, x_1\} $ correspond to perfect correlation and the pair of crossed edges over context $ \{x_1, x_2\} $ corresponds to the perfect anti-correlation.

\subsection{Contextual and Signalling Fractions}

The contextual fraction ($\CF$) \cite{Abramsky2017} measures the degree of contextuality of a given no-signalling model.
Given an empirical model $e$, the $\CF$ of $e$ is defined as the minimum $\lambda$ such that the following convex decomposition of $e$ works:
\begin{equation}
    \label{eq:convexcf}
    e = (1-\lambda) e^{NC} + \lambda e^{C},
\end{equation}
where $e^{NC}$ is a non-contextual (and no-signalling) empirical model and  $e^{C}$ is a model allowed to be contextual. 
For no-signalling models, the criterion of contextuality is just
\begin{equation}
    \label{eq:cfnosig}
   \CF > 0.
\end{equation}
As $e^{NC}$ is not allowed to be signalling, the $\CF$ of a signalling model must be greater than zero.
Thus, interpreting $\CF$ as a measure of contextuality for signalling models
would lead to erroneous conclusions. 
However, most models, including the ones considered in this paper, are signalling.

One can try to define a signalling fraction ($\SF$), in the same way $\CF$ is defined, to quantity the degree of signalling. Given a model $e$, the $\SF$ of $e$ is defined as the minimum $\mu$ such that the following convex decomposition of  $e$ works:
\begin{equation}
    \label{eq:convexsf}
    e = (1-\mu) e^{NS} + \mu e^{S},
\end{equation}
where $e^{NS}$ is a no-signalling empirical model and  $e^{S}$ is a model allowed to be signalling.

In \cite{Emeriau2022}, the signalling fraction ($\SF$) was used to quantify the amount of fictitious contextuality contributing to the contextual fraction due to signalling when applied to a signalling model.
The authors derived a criterion of contextuality for signalling models that reads 
\begin{equation}
\label{eq:cf}
\CF > 2 |\cal M| \, \SF
,\end{equation}
where $|\cal M|$ denotes the number of measurement contexts. Notice how criterion (\ref{eq:cf}) reduces to the generalised criterion (\ref{eq:cfnosig}) when $\SF = 0$.

\subsection{PR-like models}
In general, the calculation of $\CF$ and $\SF$ requires solving linear programs.
However, if the model has certain structure, the calculation could become much simpler.

We define a class of models called the \emph{PR-like} models of the Bell-CHSH scenario, which are models that share the same support as a PR box. 
More generally, a PR-like model of a $k$-cyclic scenario is a model that shares the same support as a PR box of the same scenario.

For example, a rank 3 PR-like model can be parametrised as follows:

\begin{center}
\begin{tabular}{r|ccccc}
 & $(0, 0)$ & $(0, 1)$ & $(1, 0)$ & $(1, 1)$   \\ \hline
$(x_1, x_2)$ & $\frac{1 + \epsilon_1}{2}$ & $0$ & $0$ & $\frac{1 - \epsilon_1}{2}$  \\
$(x_2, x_3)$ & $\frac{1 + \epsilon_2}{2}$ & $0$ & $0$ & $\frac{1 - \epsilon_2}{2}$  \\
$(x_3, x_1)$ & $0$ & $\frac{1 + \epsilon_3}{2}$ & $\frac{1 - \epsilon_3}{2}$ & $0$
\end{tabular}
\end{center}
where $\epsilon_1, \epsilon_2, \epsilon_3 \in [-1, 1]$.
We show that the $\CF$ and $\SF$ of a PR-like model can be computed analytically without solving the linear programs numerically.
\begin{Proposition}
    \label{prop:cfprlike}
    The contextual fraction of a PR-like model is always $1$.
\end{Proposition}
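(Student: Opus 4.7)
The plan is to exploit the following well-known bridge between strong contextuality and the contextual fraction: if the support of an empirical model $e$ is strongly contextual, then $\CF(e) = 1$. Since by definition every PR-like model shares its support with a PR box of the same $k$-cyclic scenario, and PR boxes on cyclic scenarios are strongly contextual (for $k = 3$ this is the PR prism identified in the preceding proposition), the result will follow once this bridging lemma is invoked.

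First, I would fix a convex decomposition $e = (1-\lambda) e^{NC} + \lambda e^{C}$ with $e^{NC}$ non-contextual and $\lambda < 1$, aiming for a contradiction. Pointwise non-negativity of empirical probabilities forces $\mathrm{supp}(e^{NC}) \subseteq \mathrm{supp}(e)$ at every context, because $1 - \lambda > 0$ means any entry that is zero in $e$ must already be zero in $e^{NC}$.

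Second, I would invoke the sheaf-theoretic characterisation of non-contextuality: $e^{NC}$ is non-contextual iff it admits a decomposition $e^{NC} = \sum_{g} p_g\, \delta_g$ indexed by global assignments $g \colon \mathcal{X} \to \mathcal{O}$, where $\delta_g$ places all weight on $g|_C$ at each context $C$. The support-inclusion from the previous step then implies that every $g$ with $p_g > 0$ must satisfy $g|_C \in \mathrm{supp}(e|_C)$ for every context $C \in \mathcal{M}$; that is, $g$ is a global section of the support presheaf of $e$.

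Third, I would appeal to the strong contextuality of $\mathrm{supp}(e)$. Since $e$ is PR-like, its support is the PR-box support, which admits no global section (this is precisely the content of strong contextuality of the PR prism and its $k$-cyclic generalisations). Hence no valid $g$ exists, contradicting the assumption that $e^{NC}$ is a non-contextual probabilistic model. Thus every admissible decomposition requires $\lambda = 1$, which is trivially realised by taking $e^C = e$, giving $\CF(e) = 1$.

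The only real obstacle is the clean statement and use of the support-strong-contextuality lemma; once it is applied, the remaining steps are routine. For the $k$-cyclic generalisation beyond the rank-$3$ case displayed in the definition, the same argument goes through uniformly because PR boxes on $k$-cyclic scenarios have strongly contextual supports, so no further case analysis is required.
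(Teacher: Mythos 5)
Your proposal is correct, but it reaches the contradiction by a different key lemma than the paper. The paper's proof also starts from the decomposition $e = (1-\lambda)e^{NC} + \lambda e^{C}$ and uses non-negativity of the coefficients, but it then asserts that $e^{NC}$ must itself be PR-like, claims (as ``routine'') that the only no-signalling model with PR-box support is the PR box, and concludes via strong contextuality of the PR box that no valid non-contextual $e^{NC}$ exists. You instead use only the weaker (and all that non-negativity actually delivers) inclusion $\mathrm{supp}(e^{NC}) \subseteq \mathrm{supp}(e)$, then invoke the characterisation of non-contextual models as convex mixtures of deterministic global assignments $\delta_g$, so that any $g$ with positive weight would be a global section of the PR-box support, contradicting strong contextuality of the PR prism and its $k$-cyclic analogues. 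This buys you two things: you never need the no-signalling property of $e^{NC}$ nor the uniqueness of the PR box among no-signalling models on that support, and you sidestep the paper's slight looseness in promoting support inclusion to support equality (``$e^{NC}$ is also PR-like''). In effect you are reproving the standard fact that a strongly contextual support forces $\CF{} = 1$, which applies uniformly to all $k$-cyclic PR-like models; the paper's route is shorter to state but leaves its uniqueness step unproved, whereas yours is self-contained modulo the deterministic-decomposition characterisation of non-contextuality.
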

\begin{proof}
    Consider a PR-like model $e$ with a convex decomposition as in (\ref{eq:convexcf}):
    \begin{equation*}
        e = (1-\lambda) e^{NC} + \lambda e^{C},
    \end{equation*}
    where $e^{NC}$ is a non-contextual and no-signalling model and $e^{C}$ is a model allowed to be contextual and signalling.
    Recall that the contextual fraction of $e$ is the minimum $\lambda$ such that the above decomposition remains valid.
    As the coefficents $\lambda$ and $(1 - \lambda)$ are non-negative, the models $e^{NC}$ and $e^{C}$ are also PR-like models, otherwise the decomposition would not be valid.
    It is routine to show that the only no-signalling PR-like model is the PR box, which is known to be strongly contextual. Hence there does not exist a valid $e^{NC}$ for the decomposition. 
    Therefore, the minimum $\lambda$ is $1$ and the $\CF$ of a PR-like model is always $1$.
\end{proof}

\begin{Proposition}
    \label{prop:sfprlike}
    The signalling fraction of a PR-like model is given by $max|\epsilon_i|$.
\end{Proposition}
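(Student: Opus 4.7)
The plan is to mirror the strategy used in the proof of Proposition 2.1: exploit the support structure of a PR-like model to reduce the convex decomposition to a handful of linear equations, then read off the minimum $\mu$ directly.

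First I would set up a support argument. In any decomposition $e=(1-\mu)e^{NS}+\mu e^{S}$ with $\mu\in(0,1)$ and non-negative distributions, every outcome $o$ with $e(o)=0$ forces both $e^{NS}(o)=0$ and $e^{S}(o)=0$. Hence $e^{NS}$ and $e^{S}$ must have support contained in that of $e$, which is the PR-box support; in other words they are themselves PR-like. Invoking the fact used inside the proof of Proposition 2.1---that the only no-signalling PR-like model is the PR box itself---pins $e^{NS}$ down completely: it assigns $\tfrac{1}{2}$ to each of the two supported outcomes in every context. Thus $e^{S}$ is the only free part of the decomposition, and I can parametrise it as $e^{S}(0,0)=\alpha_{1}$, $e^{S}(1,1)=1-\alpha_{1}$ in context $(x_{1},x_{2})$, analogously with $\alpha_{2}$ in $(x_{2},x_{3})$, and $e^{S}(0,1)=\alpha_{3}$, $e^{S}(1,0)=1-\alpha_{3}$ in $(x_{3},x_{1})$, each $\alpha_{i}\in[0,1]$.

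Next I would solve. Substituting this parametrisation into the defining equation $(1-\mu)(1/2)+\mu\alpha_{i}=(1+\epsilon_{i})/2$ for each context yields, for $\mu>0$, the closed form $\alpha_{i}=\tfrac{1}{2}+\epsilon_{i}/(2\mu)$. The admissibility $\alpha_{i}\in[0,1]$ is equivalent to $|\epsilon_{i}|\le\mu$, so the smallest feasible $\mu$ is $\max_{i}|\epsilon_{i}|$. I would close the argument by setting $\mu=\max_{i}|\epsilon_{i}|$, verifying that the resulting $\alpha_{i}$ all lie in $[0,1]$, thereby exhibiting an explicit decomposition that attains the bound. The degenerate case $\max_{i}|\epsilon_{i}|=0$ is handled separately: $e$ is then the PR box itself, which is no-signalling, so $\SF=0$ trivially.

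The main obstacle is the support-propagation step that forces $e^{NS}$ into the PR-like family, since that is what unlocks the earlier uniqueness result and collapses the decomposition to one free parameter per context. Once that reduction is secured, the remainder is routine bookkeeping: three decoupled linear constraints whose feasibility region is a product of intervals, optimised component-wise.
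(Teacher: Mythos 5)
Your proposal is correct and follows essentially the same route as the paper's proof: the support argument forces $e^{NS}$ to be the PR box (exactly as in the proof of the contextual-fraction proposition), and then minimising $\mu$ subject to the residual part being a valid model yields $\mu_{\min}=\max_i|\epsilon_i|$. Your explicit $\alpha_i$ parametrisation and the separate treatment of the degenerate case $\max_i|\epsilon_i|=0$ are just a more detailed bookkeeping of the paper's tightness condition $e \ge (1-\mu)e^{NS}$.
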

\begin{proof}
    Consider a PR-like model $e$ with a convex decomposition as in (\ref{eq:convexsf}):
    \begin{equation*}
        e = (1-\mu) e^{NS} + \mu e^{S},
    \end{equation*}
    where $e^{NS}$ is a no-signalling model and $e^{S}$ is a model allowed to be signalling. 
    The signalling fraction of $e$ is the minimum $\mu$ such that the above decomposition remains valid.
    As per the proof of Proposition \ref{prop:cfprlike}, the model no-signalling model $e^{NS}$ is PR-like and thus a PR box.
    While minimising $\mu$ (maximising $1 - \mu$), we have to make sure that
    \begin{equation*}
        e \ge (1-\mu) e^{NS},
    \end{equation*}
    so that the decomposition remains valid.
    The minimum $\mu$ is achieved when the inequality is tight, i.e.\ when 
    \begin{equation*}
        (1 - \mu_{\text{min}}) \frac{1}{2} = \min_{i \in \{1,2,3\}} \left(\frac{1 \pm \epsilon_i}{2}\right).
    \end{equation*}
    Hence the signalling fraction of a PR-like model is given by $\max_i|\epsilon_i|$.
\end{proof}

The two propositions above showed that the values of $\CF$ and $\SF$ of a PR-like model can be computed analytically without resorting the linear programming. Additionally, the criteria (\ref{eq:cf}) can be specialised to PR-like models as follows:
\begin{equation}
    \label{eq:cfprlike}
    \SF \le \frac{1}{2 M},
\end{equation}
where M is the number of measurement contexts.

\section{Background on the Contextuality by Default Framework}
\label{sec:cbd}
In the setting of Contextuality-by-Default (CbD), there are two important notions: \emph{contents}, denoted by $q_i$, which are measurements, or more generally, questions about the system; and \emph{contexts}, denoted by $c^j$, which represent the conditions under which the questions are asked, \ their ordering. 
Every $q_i$ in a $c^j$ gives rise to a random variable $R^j_i$ taking values in $\{\pm 1\}$, and representing possible answers and their probabilities. All random variables in a given context are jointly distributed. 

A well-studied class of CbD systems are the cyclic systems~\cite{Dzhafarov2013,Dzhafarov2016a,Dzhafarov2015a}, where each context has exactly 2 contents and every content is in exactly 2 contexts.
The rank of a cyclic system is the number of contents, or equivalently, the number of contexts.

A cyclic system of rank $n$ is contextual if and only if $\CNT{1}$ is positive, where $\CNT{1}$ is defined as:
\begin{equation}\label{eq:BellInequality}
    \CNT{1} := s_{odd} \left(\left\{\left<R^{j}_{i_j}R^{j}_{i'_j}\right>\right\}_{j=1,\ldots,n}\right) - \Delta - n + 2 > 0
\end{equation}
where $i_j\neq i'_j$ for all $j$ and $R^{j}_{i_j}, R^{j}_{i'_j}$ are well-defined for all $j$. Quantities $s_{odd}: \mathbb{R}^n \to \mathbb{R}$ and $\Delta$ are defined as follows:
\begin{equation}
    s_{odd}\left(\underline{x}\right) = \max_{\substack{\underline{\sigma}\in \{\pm1\}^k; \\ \mathfrak{p}(\underline{\sigma}=-1)}}\underline{\sigma}\cdot \underline{x}\ ; \qquad 
\Delta = \sum_{i=1}^n \left|\left<R^{j_i}_{i}\right> - \left<R^{j'_i}_{i}\right>\right|
\end{equation}
where $\mathfrak{p}(\underline{\sigma}) = \prod_{i=1}^n \sigma_i$ ($\mathfrak{p}$ is the parity function of $\underline{\sigma}$).  
The quantity $\Delta$ is called \emph{direct influence} which measures the degree of signalling in the system. A no-signalling system has $\Delta=0$. 

A feature of the $\CNT{1}$ measure is that it generalises the Bell-CHSH inequaltiy. To see this, consider a cyclic system of rank 4, i.e.\ the Bell-CHSH scenario. The $\CNT{1}$ quantity becomes:
\begin{align}
    \CNT{1} = &s_\text{odd}\left(\left< R^0_0\ R^0_1 \right>, \left< R^1_1\ R^1_2 \right>, \left< R^2_2\ R^2_3 \right>, \left< R^3_3\ R^3_0 \right>\right) \\ \nonumber
    &- \left|\left< R^0_1 \right> - \left< R^1_1 \right>\right| - \left|\left< R^1_2 \right> - \left< R^2_2 \right>\right| - \left|\left< R^2_3 \right> - \left< R^3_3 \right>\right| - \left|\left< R^3_0 \right> - \left< R^0_0 \right>\right| - 2.
\end{align}
Note the Bell-CHSH inequality is recovered by the measure $\CNT{1}$ in the case of no-signalling systems, i.e.\ $\Delta = 0$.

\subsection*{PR-like models in the CbD framework}

For a PR-like model, the correlation term $\left< R^j_{i}R^j_{i+1} \right>$ is either $+1$ for the correlated contexts or $-1$ for the anti-correlated contexts.
By definition, a PR-like model has odd number of anti-correlated contexts, thus the values of the $s_\text{odd}$ term is always $n$.

The quantity $\Delta$ is the sum of the absolute differences between the correlations of the same observable in different contexts. For a PR-like model, $\Delta$ has the following form:
\begin{equation*}
    \Delta = |\epsilon_1 - \epsilon_2| + |\epsilon_2 - \epsilon_3| + \cdots + |\epsilon_{n-1} - \epsilon_n| + |\epsilon_n + \epsilon_1|
\end{equation*}
Recall that for PR-like models we have $\SF = \max_i |\epsilon_i|$, which can be used to bound the value of $\Delta$.
One can show that for a PR-like model with $n$ contexts, $\Delta$ is bounded by:
\begin{equation*}
    2 \SF \leq \Delta \leq \begin{cases}
        2n \SF &\quad n\text{ odd} \\
        2(n-1) \SF &\quad n\text{ even}
    \end{cases}.
    \label{eq:inequality}
\end{equation*}
The intuition behind the lower bound is that the value of $\SF$ makes sure that there must be a $\epsilon_i$ that is at least $\SF$ away from $0$.
The relationship between $\Delta$ and $\SF$ is illustrated in Figure \ref{fig:sfdeltaheatmap}. 
One can readily see that the sheaf contextual region is a strict subset of the CbD contextual region (considering only the subspace where empirical models are allowed), unless 
the bound established by (\ref{eq:cfprlike}) is extended to no less than $2/6$.

\begin{figure}[b!]
\begin{center}
    \fbox{\begin{minipage}{9cm}{
        There is an \emph{apple} and a \emph{strawberry}. 
    \begin{enumerate}
        \item One of them is  \emph{red} and the very same one is \emph{round}. 
        \item  One of them is  \emph{round} and that very same one is  \emph{sweet}. 
        \item One of them is   \emph{sweet} and the other one of them is \emph{red}.
    \end{enumerate}}\end{minipage}}
    \end{center}
    \caption{Example of the PR-anaphora schema with adjectival modifiers.}
    \label{fig:schemadj}
\end{figure}

\begin{figure}[t!]
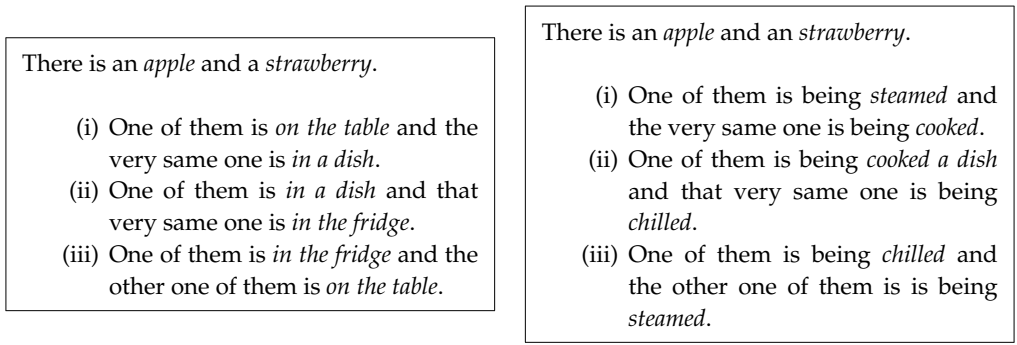

\centering
\fbox{\begin{minipage}{6cm}
        There is an \emph{apple} and a \emph{strawberry}. 
    \begin{enumerate}
        \item One of them is  \emph{on the table} and the very same one is \emph{in a dish}. 
        \item One of them is  \emph{in a dish} and that very same one is \emph{in the fridge}. 
        \item One of them is  \emph{in the fridge} and the other one of them is \emph{on the table}.
    \end{enumerate}\end{minipage}}
    \quad
    \fbox{\begin{minipage}{6cm}
        There is an \emph{apple} and an \emph{strawberry}. 
    \begin{enumerate}
        \item One of them is being \emph{steamed} and the very same one is being \emph{cooked}. 
        \item One of them is being \emph{cooked a dish} and that very same one is being \emph{chilled}. 
        \item One of them is being \emph{chilled} and the other one of them  is is being \emph{steamed}.
    \end{enumerate}
\end{minipage}}
\caption{Examples of the PR-anaphora schema with verbs (left) and preposition modifiers (right)}
\label{fig:Prism}
\end{figure}

\section{Methodology: PR-anaphora schema}
\label{sec:pos}
The goal is to construct a schema with anaphoric ambiguities that can be modelled by a measurement scenario capable of admitting contextual empirical models.
For simplicity, we use the minimal measurement scenario, i.e.\ the 3-cyclic scenario, as the underlying measurement scenario and design the schema in such a way that the resulting empirical model is as close to a it  as possible. The measurement scenario is as follows:
\begin{enumerate}
    \item observables $X = \{X_1, X_2, X_3\}$,
    \item contexts $\mathcal{M} = \{\{X_1, X_2\}, \{X_2, X_3\}, \{X_3, X_1\}\}$,
    \item outcomes $O = \{O_1, O_2\}$.
\end{enumerate}
Here the ambiguity lies in the anaphors $X_1$, $X_2$ and $X_3$ and the possible referents are $O_1$ and $O_2$. 

\begin{Definition}
    The PR-anaphora schema is defined as follows: 
    \begin{center}
        \begin{minipage}{7cm}
        There is an $O_1$ and an $O_2$.
        \begin{enumerate}
            \item It is $ X_1$ and the same one is $ X_2$. 
            \item It is $ X_2$ and the same one is $ X_3$. 
            \item It is $ X_3$ and the other one is $ X_1$.
        \end{enumerate}
        \end{minipage}
    \end{center}
Here $O_1$ and $O_2$ are two noun phrases, i.e.\ the candidate referents; $X_1, X_2, X_3$ are three modifiers commonly acting on  $ O_1, O_2$. The $X_i$'s are the observables of the scenario. 
\label{def:pranaphora}
\end{Definition}

It follows that the schema can be possibilistically  modelled by a PR-prism and  is logically contextual.  The  schema is however  generated  in a way such that it is easily readable by computers.  In its current form, it is grammatical but unnatural; in the sense that  it   could  not have been generated by humans.  Figures \ref{fig:schemadj} and \ref{fig:Prism}  show some natural instantiations of it to  nouns and their  adjectival,  verb,  and  preposition modifiers. Other modifiers can be dealt with in a similar fashion. 

\subsection{Probabilistic PR-anaphora schema}
To construct probabilistic models for the PR-anaphora schema, we need to define a probability distribution over the possible referents of the anaphors.
We do this by leveraging the masked  language model BERT, which has provided improved baselines for many NLP tasks  \cite{Devlin2019}.
In practice, the masked word is replaced with a special token \texttt{[MASK]} and the model produces a probability distribution over the entire vocabulary used by BERT.
For example, given a sentence such as: {\small \texttt{The goal of life is [MASK].}}, BERT produces a probability distribution over each word in the vocabulary:
\begin{center}
\begin{tabular}{r|cccccc}
token   & \texttt{life} & \texttt{survival} & \texttt{love} & \texttt{freedom} & \texttt{simplicity} & $\cdots$ \\
\midrule
    prob. & 0.1093 & 0.0394 & 0.0329 & 0.0300 & 0.0249 & $\cdots$
\end{tabular}   
\end{center}

In order to  construct probabilistic models for the PR-anaphora schema, we go through these two steps:   first, as the ambiguities lies in the anaphors, we replace the anaphor with the special token \texttt{[MASK]}, then the  prediction of BERT is  interpreted as the probability distribution over the possible referents of the anaphor. As an example consider the following 3 sentences. We feed them separately to BERT:


\begin{enumerate}
    \item {There is an apple and a strawberry.} {The [MASK] is red and the same one is round.}
    \item {There is an apple and a strawberry.} {The [MASK] is round and the same one is sweet.}
    \item {There is an apple and a strawberry.} {The [MASK] is sweet and the other one is red.}
\end{enumerate}


\noindent BERT will  produce, probabilities $P_i\left( \texttt{apple} \right) $ and $P_i\left( \texttt{strawberry} \right) $ for the i-th sentence shown above. As BERT gives a probability score to every word in the vocabulary which sum to one, it is unlikely that $P_i\left( \texttt{apple} \right) +P_i\left( \texttt{strawberry} \right) = 1$. 
We therefore normalise them using the following map \footnote{The normalisation here is equivalent to limiting the vocabulary to just \texttt{apple} and \texttt{strawberry} when BERT computes the probability scores.}:
\begin{align*}
    P_i\left( \texttt{apple} \right) \mapsto \frac{P_i\left( \texttt{apple} \right)}{P_i\left( \texttt{apple} \right) + P_i\left( \texttt{strawberry} \right)} \\
    P_i\left( \texttt{strawberry} \right) \mapsto \frac{P_i\left( \texttt{strawberry} \right)}{P_i\left( \texttt{apple} \right) + P_i\left( \texttt{strawberry} \right)}
\end{align*}
We  use the normalised probabilities to construct a PR-like model with empirical table:
\begin{center}
\begin{tabular}{r|cccc}
    & $(\text{app.},\text{app.})$ & $(\text{app.}, \text{str.})$ & $(\text{str.}, \text{app.})$ & $(\text{str.}, \text{str.})$   \\
    \midrule
    $(\text{red}, \text{round})$ & $P_1\left( \texttt{apple} \right) $ & $0$ & $0$ & $P_1\left( \texttt{strawberry} \right) $  \\
    $(\text{round}, \text{sweet})$ & $P_2\left( \texttt{apple} \right) $ & $0$ & $0$ & $P_2\left( \texttt{strawberry} \right) $  \\
    $(\text{sweet}, \text{red})$ & $0$ & $P_3\left( \texttt{apple} \right) $ & $P_3\left( \texttt{strawberry} \right) $ & $0$
\end{tabular}
\end{center}
This procedure can be used similarly for other modifiers such as verbs and prepositions.
Notice that such an empirical model is no-signalling only if $P_i\left( \texttt{apple} \right) = P_i\left( \texttt{strawberry} \right) = 0.5$ for all $i$.
It is therefore very unlikely that the model is no-signalling. To determine whether a signalling model is contextual, we use the inequality criterion of Equation (\ref{eq:cf}). One can show that the contextual fraction $\CF$ of a model that has the same support as the PR prism is always 1. 
Also, all the examples we considered in this paper have 3 contexts, i.e.\ $|\mathcal{M}| = 3$.
Thus, to tell if such a model is sheaf-contextual, we just need to check if $\SF < 1/6$.  
Since the value of $\SF$ is determined by the maximum absolute value of the $\epsilon_i$'s, we need to make sure that the $\epsilon_i$'s are as small as possible. In other words, we ought to make BERT as uncertain as possible about the masked word, ideally giving it a uniform distribution over the candidates.
%


\begin{figure}[t!]
    \footnotesize
        \centering
    \begin{subfigure}[b]{0.45\textwidth}
        \begin{tikzpicture}

\def\coordinatesone{
(0.00000, 89)
(0.01667, 541)
(0.03333, 1481)
(0.05000, 2829)
(0.06667, 4706)
(0.08333, 7014)
(0.10000, 9892)
(0.11667, 13011)
(0.13333, 16616)
(0.15000, 20939)
(0.16667, 20939)
}
\def\coordinatestwo{
(0.16667, 25401)
(0.18333, 30238)
(0.20000, 35510)
(0.21667, 41778)
(0.23333, 48285)
(0.25000, 55150)
(0.26667, 61378)
(0.28333, 69755)
(0.30000, 77316)
(0.31667, 86630)
(0.33333, 95831)
(0.35000, 105213)
(0.36667, 116141)
(0.38333, 127268)
(0.40000, 138735)
(0.41667, 150332)
(0.43333, 163467)
(0.45000, 176802)
(0.46667, 190485)
(0.48333, 206229)
(0.50000, 222590)
(0.51667, 239353)
(0.53333, 256921)
(0.55000, 275669)
(0.56667, 296825)
(0.58333, 319689)
(0.60000, 342735)
(0.61667, 365744)
(0.63333, 392241)
(0.65000, 420631)
(0.66667, 452127)
(0.68333, 485798)
(0.70000, 523872)
(0.71667, 564142)
(0.73333, 610601)
(0.75000, 659018)
(0.76667, 715863)
(0.78333, 781303)
(0.80000, 851162)
(0.81667, 935921)
(0.83333, 1034126)
(0.85000, 1156000)
(0.86667, 1298401)
(0.88333, 1486346)
(0.90000, 1722286)
(0.91667, 2044158)
(0.93333, 2523394)
(0.95000, 3312692)
(0.96667, 4975203)
(0.98333, 20594538)
(1.00000, 20594538)
}

\begin{semilogyaxis}[
    xmin=0, xmax=1,
    xtick={0,1/6,2/6,3/6,4/6,5/6,1},
    xticklabels={$0$,  $1 / 6$,  $2 / 6$,  $3 / 6$,  $4 / 6$,  $5 / 6$,  $1$},
    grid,
    grid style=dashed,
    legend pos={outer north east},
    legend cell align={left},
    ytick={1,10,100,1000,10000,100000,1000000,10000000},
    width=7cm,
    ]
\addplot+[ybar interval,mark=no,color=gray,fill=gray!50,ybar legend] plot coordinates {
\coordinatestwo
};
\addplot+[ybar interval,mark=no,color=black,fill=black!50,ybar legend] plot coordinates {
\coordinatesone
};
\end{semilogyaxis}
\end{tikzpicture}
        \caption{Signalling fraction}
        \label{fig:sfhisto}
    \end{subfigure}
    \hfill
    \begin{subfigure}[b]{0.45\textwidth}
        \begin{tikzpicture}

\def\coordinatesone{
(0.00000, 1405)
(0.10000, 9909)
(0.20000, 26559)
(0.30000, 51635)
(0.40000, 85610)
(0.50000, 127281)
(0.60000, 178814)
(0.70000, 241505)
(0.80000, 315771)
(0.90000, 404327)
(1.00000, 508979)
(1.10000, 634919)
(1.20000, 787258)
(1.30000, 975805)
(1.40000, 1225513)
(1.50000, 1558731)
(1.60000, 2039628)
(1.70000, 2841985)
(1.80000, 4509039)
(1.90000, 20414275)
(2.00000, 20414275)
}
\def\coordinatestwo{
(2.00000, 4538077)
(2.10000, 1490723)
(2.20000, 1090879)
(2.30000, 881894)
(2.40000, 743928)
(2.50000, 641829)
(2.60000, 566138)
(2.70000, 500857)
(2.80000, 448756)
(2.90000, 403185)
(3.00000, 363634)
(3.10000, 330001)
(3.20000, 299634)
(3.30000, 272873)
(3.40000, 247553)
(3.50000, 226047)
(3.60000, 205371)
(3.70000, 187851)
(3.80000, 170842)
(3.90000, 156100)
(4.00000, 142176)
(4.10000, 129847)
(4.20000, 118660)
(4.30000, 107623)
(4.40000, 96928)
(4.50000, 88631)
(4.60000, 79874)
(4.70000, 72337)
(4.80000, 64909)
(4.90000, 58254)
(5.00000, 52133)
(5.10000, 45802)
(5.20000, 40613)
(5.30000, 36058)
(5.40000, 31313)
(5.50000, 27016)
(5.60000, 23257)
(5.70000, 19359)
(5.80000, 15736)
(5.90000, 10834)
(6.00000, 10834)
}

\begin{semilogyaxis}[
    xmin=0, xmax=6,
    xtick={0,1,2,3,4,5,6},
    xticklabels={$0$, $1$, $2$, $3$, $4$, $5$, $6$},
    grid,
    grid style=dashed,
    legend pos={outer north east},
    legend cell align={left},
    ytick={1,10,100,1000,10000,100000,1000000,10000000},
    width=7cm,
    ]
\addplot+[ybar interval,mark=no,color=gray,fill=gray!50,ybar legend] plot coordinates {
\coordinatestwo
};
\addplot+[ybar interval,mark=no,color=black,fill=black!50,ybar legend] plot coordinates {
\coordinatesone
};
\end{semilogyaxis}
\end{tikzpicture}
        \caption{Direct influence}
        \label{fig:deltahisto}
    \end{subfigure}
    \caption{Histograms of (a) the signalling fraction and (b) the direct influence of the 51,966,480 models constructed for the full dataset
    Highlighted are the contextual models with (a) $\SF < 1/6$ or (b) $\Delta < 2$.
    The fraction of sheaf-contextual models is 0.148\% and the fraction of CbD-contextual models is 71.1\%.
    The sheaf-contextual bars in the histogram are too small to be visible.}

    \label{fig:histograms}
\end{figure}
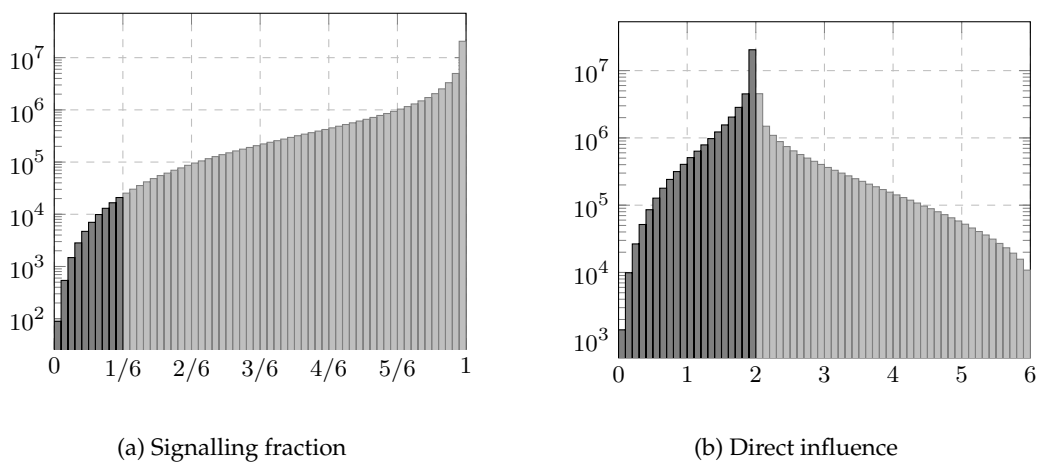

\section{Results}

\subsection{Dataset}
We adopt a systemic approach to construct a much larger dataset of empirical models in order to investigate the prevalence of contextuality in natural language data on a large scale.
To this end, we considered the entire Simple English Wikipedia corpus using a March 2022 snapshot of it which is made available to researchers.
This snapshot contains 205,328 articles with 40 million tokens in total, averaging 197 tokens per article.
The Simple English Wikipedia is a version of the English Wikipedia that uses a limited vocabulary and simpler grammar. It is designed for people less proficient in English, such as children and non-native speakers.
We chose this corpus because it is relatively small size while still containing a diverse range of topics.
We extracted all the adjective-noun phrases from the dataset and used them to construct examples of the PR-anaphora schema.
The dataset underwent the following standard preprocessing steps used in previous work \cite{LingMat2019}:
\begin{enumerate}
    \item Each article was tokenised using the \texttt{word\_tokenize} function in NLTK~\cite{bird2009natural}.
    \item Each tokenised article was then split into sentences using the \texttt{sent\_tokenize} function in NLTK.
    \item Each sentence was tagged with the Penn Treebank tag set using the \texttt{pos\_tag\_sents} function in NLTK to obtain the part-of-speech tags for each token. We used the default tagger offered by NLTK, which was a Greedy Averaged Perceptron tagger.
    \item Adjective-noun phrases were extracted from the whole tokenised dataset by scanning through the part-of-speech tags for each sentence. Neighbouring  tokens with the tags \texttt{JJ} and \texttt{NN} were extracted as adjective-noun phrases.
\end{enumerate}

After filtering out noise words, e.g.\ one letter words and numbers, or nouns that were not in the BERT vocabulary,  
%
we obtained  219,633 adjective-noun phrases, 9,521 nouns, and 21,152 adjectives.  To construct examples of the PR-anaphora schema, we chose the 5 most frequent common adjectives for the noun pair. The number was taken to be 5 since this provided us with a good level of overlap and at the same time, a large amount of data. This resulted in 866,108  noun pairs, with which we constructed 51,966,480 examples of the PR-anaphora schema.
Samples of the noun pairs and their corresponding adjectives are shown in Table \ref{tab:instanceswithsfdelta}.

\subsection{Contextuality in the dataset}
We used BERT to construct empirical models for all 51,966,480 examples of the PR-anaphora schema.
Out of these, 77,118 (0.148\%) were found to be sheaf-contextual; 36,938,948 (71.1\%) were found to be CbD-contextual.
Here sheaf-contextual means that the model has $\SF < 1/6$ and CbD-contextual means that the model has $\Delta < 2$.

\begin{figure}[h]
    \centering
    \begin{tikzpicture} 
    \begin{axis}[
    xmin=0,xmax=1,ymin=0,ymax=6, 
    axis on top,
    width=0.50\textwidth,
    height=0.50\textwidth,
    colorbar,
    colorbar style={
            ytick={0, 1, 2, 3, 4, 5, 6, 7},
            yticklabel={$10^{\pgfmathprintnumber{\tick}}$},
        },
    colormap name=viridis,
    xtick={0, 0.1666, 0.3333, 0.5, 0.6666, 0.8333, 1},
    xticklabels={$0$, $1/6$, $2/6$, $3/6$, $4/6$, $5/6$, $1$},
    ytick={0, 1, 2, 3, 4, 5, 6},
    legend style={
            area legend,
            anchor=south,
            at={(0.5,1.1)},
            legend columns=-1,
            legend cell align = {left},
    },
    xlabel={signalling fraction ($\SF$)},
    ylabel={direction influence ($\Delta$)},
]
\tikzset{forbidden/.style={pattern=checkerboard light gray, pattern color=gray, draw=gray, opacity=0.1}}
\tikzset{contextual/.style={opacity=0.8, line width=1.0pt}}

    \addplot [draw=none,point meta=x, forget plot] coordinates {(0,0) (7.1219,0)};

    \addplot [forget plot] graphics [xmin=0,xmax=1,ymin=0,ymax=6] {
        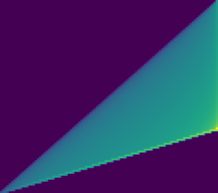
    };
    \addplot [fill=white, forget plot] coordinates {(0,0) (0,6) (1, 6) };
    \addplot [forbidden, forget plot] coordinates {(0,0) (0,6) (1, 6) };
    \addplot [fill=white, forget plot] coordinates {(0,0) (1, 0) (1, 2)};
    \addplot [forbidden] coordinates {(0,0) (1, 0) (1, 2)};

    \pgfmathsetmacro{\xoffset}{0.005}
    \pgfmathsetmacro{\yoffset}{\xoffset*6}
\addplot+[contextual, draw=black, dashed, no markers] plot coordinates {(\xoffset, 2) (1-\xoffset, 2) (1-\xoffset,\yoffset) (\xoffset,\yoffset) (\xoffset, 2)};
\addplot+[contextual, draw=black, dotted, no markers] plot coordinates {(1/6, \yoffset) (1/6, 6-\yoffset) (\xoffset, 6-\yoffset) (\xoffset,\yoffset) (1/6, \yoffset)};

\addlegendentry{forbidden}
\addlegendentry{CbD contextual}
\addlegendentry{Sheaf contextual}

\end{axis}
\end{tikzpicture}
\caption{The distribution of the instances in the space of direct influence and signalling fraction, which is equally divided into 200 times 200 bins. The colour of each bin represents the log of the number of instances that fall into that bin. As determined by Equation \ref{eq:inequality}, certain regions of the space are not accessible to the instances, which is shown as \emph{forbidden} in the figure. The regions where the instances are either CbD contextual or sheaf contextual are outlined in the figure. }
\label{fig:sfdeltaheatmap}
\end{figure}
Figure \ref{fig:histograms} shows the distribution of signalling fraction $\SF$ and direct influence $\Delta$ of the examples.
The distribution of signalling fraction $\SF$ can be seen heavily skewed towards 1 and sharply peaking at 1, while 
the distribution of direct influence $\Delta$ sharply peaks at 2. 
Our hypothesis is that in the PR-anaphora schema examples, BERT often predicts the same word for the masked token in all the contexts with high probability,  resulting all the $\epsilon$ values  to be either close to -1 or 1.
In order to see why, suppose that $(\epsilon_1, \epsilon_2, \epsilon_3) = (1, 1, 1)$ or $(-1, -1, -1)$; this would result in $\SF = \max_i|\epsilon_i| = 1$ and $\Delta = |\epsilon_1 - \epsilon_2| + |\epsilon_2 - \epsilon_3| + |\epsilon_3 + \epsilon_1| = 2$. 
Figure \ref{fig:sfdeltaheatmap} shows the distribution of the examples in the space of $\Delta$ and $\SF$. The majority of the examples are concentrated at the point $(\SF, \Delta) = (1, 2)$, which is the point where the $\epsilon$ values are all equal to 1 or -1.  
As the sheaf-contextual region is fully enclosed by the CbD-contextual region, it follows that the sheaf-contextual models are a strict subset of the CbD-contextual models.
Table \ref{tab:instanceswithsfdelta} shows samples of the instances of the PR-anaphora schema with their signalling fraction $\SF$ and direct influence $\Delta$.

\begin{table}[h]
\footnotesize
    \centering
    \begin{tabular}{ll|lll|cc}
        \toprule
        \multicolumn{2}{c|}{nouns} & \multicolumn{3}{c|}{adjectives} & $\SF$ & $\Delta$ \\
        \midrule
scholar & opposition & British & great & Russian & 0.963 & 2.217 \\
apprentice & side & last & former & new & 0.992 & 2.626 \\
camera & telescope & special & main & old & 0.824 & 2.737 \\
tea & tree & Japanese & small & Australian & 0.787 & 3.260 \\
prey & day & regular & important & primary & 1.000 & 2.035 \\
\midrule
fox & passenger & last & American & female & 0.748 & \bf{1.496} \\
dwarf & bass & new & Russian & black & 0.985 & \bf{1.979} \\
dancer & scene & main & nude & German & 0.969 & \bf{1.938} \\
videos & text & short & full & sexual & 1.000 & \bf{2.000} \\
series & sheep & regular & single & famous & 1.000 & \bf{1.999} \\
\midrule
fire & relief & American & direct & poor & \bf{0.165} & \bf{0.376} \\
person & photographer & British & American & French & \bf{0.155} & \bf{0.310} \\
track & rule & American & new & British & \bf{0.165} & \bf{0.779} \\
memory & saint & great & important & certain & \bf{0.141} & \bf{0.283} \\
island & architect & new & British & Japanese & \bf{0.142} & \bf{0.292} \\
\bottomrule
    \end{tabular}
    \caption{Randomly selected samples of instances of the PR-anaphora schema with signalling fraction  and direct influence values,  highlighted when  contextual.}
    \label{tab:instanceswithsfdelta}
\end{table}

\subsection{Similar Noun Subset of Dataset}
The examples of the schema restricted the set of adjectives $\{X_1, X_2, X_3\}$  to the 5 most  frequent adjectives of each noun pair $(O_1. O_2)$. It however did not impose any restrictions on the noun pairs themselves.  As a result, we come across noun pairs that are very unlikely to have occurred together in the same context.  Some of these noun pairs even lead to contextual examples, for instance  the pair \emph{(memory, saint)},  from Table \ref{tab:instanceswithsfdelta}, which is  both CbD and sheaf contextual.  Such pairs of nouns can still share adjectives; as one can see both \emph{memory} and \emph{saint} are commonly modified by any of the three adjectives  \emph{great, important, certain}. In order to  filter out these instances, we only consider pairs of nouns that are highly semantically related, and as a result have often occurred together in the same context.  Cosine of the angle between two word vectors has proven to be a good measure of semantic similarity. We form a similar noun subset of the dataset by restricting it to  with top $1\%$ most semantically similar noun pairs.  A selection of these and their degrees of contextuality are presented in Table \ref{tab:instanceswithsfdelta}.  This resulted in an increase in the  percentage of  the contextual instances.  The  percentage of the sheaf-contextual examples increased to 0.50\% from 0.0148\% and that  of the CbD-contextual ones increased to 81.83\% from 71.1\%.  See Figure \ref{fig:histograms-similar} for the  histograms of direct influence and SF. 

\begin{figure}[t!]
    \footnotesize
        \centering
    \begin{subfigure}[b]{0.45\textwidth}
        \begin{tikzpicture}

\def\coordinatesone{
(0.00000, 8)
(0.01852, 36)
(0.03704, 97)
(0.05556, 211)
(0.07407, 299)
(0.09259, 434)
(0.11111, 575)
(0.12963, 774)
(0.14815, 973)
(0.16667, 973)
}
\def\coordinatestwo{
(0.16667, 1133)
(0.18367, 1277)
(0.20068, 1494)
(0.21769, 1720)
(0.23469, 1926)
(0.25170, 2107)
(0.26871, 2167)
(0.28571, 2349)
(0.30272, 2723)
(0.31973, 2854)
(0.33673, 3201)
(0.35374, 3367)
(0.37075, 3718)
(0.38776, 3869)
(0.40476, 4182)
(0.42177, 4262)
(0.43878, 4690)
(0.45578, 4866)
(0.47279, 4995)
(0.48980, 5390)
(0.50680, 5571)
(0.52381, 5811)
(0.54082, 5970)
(0.55782, 6364)
(0.57483, 6448)
(0.59184, 7068)
(0.60884, 7328)
(0.62585, 7750)
(0.64286, 7993)
(0.65986, 8418)
(0.67687, 8498)
(0.69388, 8822)
(0.71088, 9356)
(0.72789, 9843)
(0.74490, 10282)
(0.76190, 11057)
(0.77891, 11713)
(0.79592, 11863)
(0.81293, 13038)
(0.82993, 13940)
(0.84694, 15184)
(0.86395, 16148)
(0.88095, 17258)
(0.89796, 20273)
(0.91497, 22033)
(0.93197, 26185)
(0.94898, 32057)
(0.96599, 42310)
(0.98299, 85322)
(1.00000, 85322)
}

\begin{semilogyaxis}[
    xmin=0, xmax=1,
    xtick={0,1/6,2/6,3/6,4/6,5/6,1},
    xticklabels={$0$,  $1 / 6$,  $2 / 6$,  $3 / 6$,  $4 / 6$,  $5 / 6$,  $1$},
    grid,
    grid style=dashed,
    legend pos={outer north east},
    legend cell align={left},
    ytick={1,10,100,1000,10000,100000,1000000,10000000},
    width=7cm,
    ]
\addplot+[ybar interval,mark=no,color=gray,fill=gray!50,ybar legend] plot coordinates {
\coordinatestwo
};
\addplot+[ybar interval,mark=no,color=black,fill=black!50,ybar legend] plot coordinates {
\coordinatesone
};
\end{semilogyaxis}
\end{tikzpicture}
        \caption{Signalling fraction}
        \label{fig:sfhistosim}
    \end{subfigure}
    \hfill
    \begin{subfigure}[b]{0.45\textwidth}
        \begin{tikzpicture}

\def\coordinatesone{
(0.00000, 88)
(0.10526, 530)
(0.21053, 1380)
(0.31579, 2661)
(0.42105, 4146)
(0.52632, 5349)
(0.63158, 7237)
(0.73684, 9360)
(0.84211, 11239)
(0.94737, 13162)
(1.05263, 15483)
(1.15789, 17969)
(1.26316, 20810)
(1.36842, 23642)
(1.47368, 27533)
(1.57895, 32817)
(1.68421, 41245)
(1.78947, 56898)
(1.89474, 135874)
(2.00000, 135874)
}
\def\coordinatestwo{
(2.00000, 30185)
(2.10256, 12754)
(2.20513, 8883)
(2.30769, 6919)
(2.41026, 5379)
(2.51282, 4284)
(2.61538, 3532)
(2.71795, 3109)
(2.82051, 2444)
(2.92308, 2257)
(3.02564, 1815)
(3.12821, 1617)
(3.23077, 1371)
(3.33333, 1185)
(3.43590, 1000)
(3.53846, 885)
(3.64103, 696)
(3.74359, 602)
(3.84615, 513)
(3.94872, 513)
(4.05128, 382)
(4.15385, 333)
(4.25641, 295)
(4.35897, 223)
(4.46154, 171)
(4.56410, 154)
(4.66667, 145)
(4.76923, 115)
(4.87179, 102)
(4.97436, 66)
(5.07692, 62)
(5.17949, 50)
(5.28205, 48)
(5.38462, 36)
(5.48718, 22)
(5.58974, 15)
(5.69231, 8)
(5.79487, 7)
(5.89744, 0)
(6.00000, 0)
}

\begin{semilogyaxis}[
    xmin=0, xmax=6,
    xtick={0,1,2,3,4,5,6},
    xticklabels={$0$, $1$, $2$, $3$, $4$, $5$, $6$},
    grid,
    grid style=dashed,
    legend pos={outer north east},
    legend cell align={left},
    ytick={1,10,100,1000,10000,100000,1000000,10000000},
    width=7cm,
    ]
\addplot+[ybar interval,mark=no,color=gray,fill=gray!50,ybar legend] plot coordinates {
\coordinatestwo
};
\addplot+[ybar interval,mark=no,color=black,fill=black!50,ybar legend] plot coordinates {
\coordinatesone
};
\end{semilogyaxis}
\end{tikzpicture}
        \caption{Direct influence}
        \label{fig:deltahistosim}
    \end{subfigure}
    \caption{Histograms of (a) the signalling fraction and (b) the direct influence of the 519,660 models constructed for the similar nouns subset of the dataset.}
    \label{fig:histograms-similar}
\end{figure}
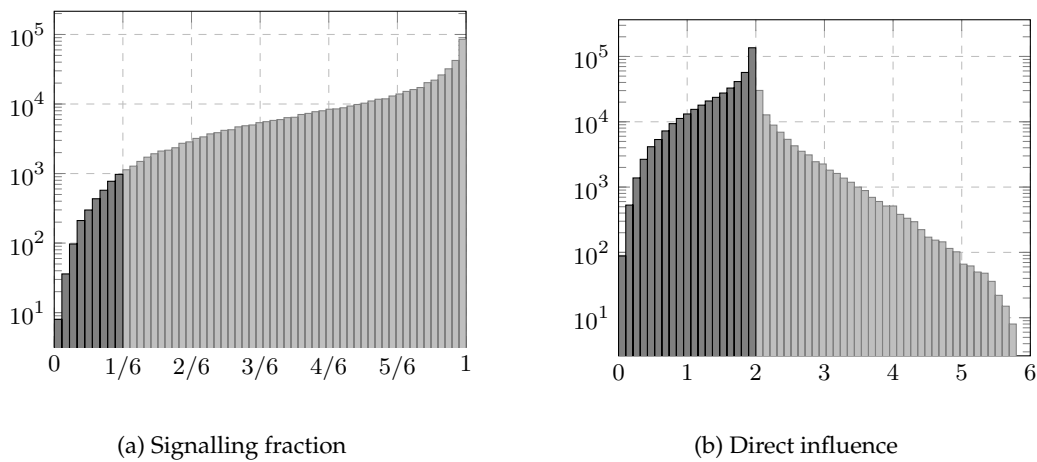

\begin{table}[b!]
    \footnotesize
    \centering
    \begin{tabular}{ll|lll|cc}
        \toprule
        \multicolumn{2}{c|}{nouns} & \multicolumn{3}{c|}{adjectives} & $\SF$ & $\Delta$ \\
        \midrule
 television & tv & nationwide & web & live & 0.74 & \textbf{1.47} \\
grandmother & grandfather & paternal & great & maternal & 0.37 & \textbf{0.73} \\
painting & sculpture & modern & great & famous & 0.86 & 2.09 \\
artist & facility & medical & new & national & 1.00 & 5.25 \\
supplier & producer & local & single & main & 0.73 & \textbf{1.62} \\

\midrule
railroad & railway & national & new & main & \textbf{0.11} & \textbf{0.22} \\
journalist & reporter & black & Italian & American & \textbf{0.12} & \textbf{0.24} \\
 station & hospital & small & main & large & \textbf{0.14} & \textbf{0.54} \\
 creature & snake & common & giant & wooden & \textbf{0.13} & \textbf{0.26} \\
assassin & journalist & Japanese & American & French & 0.58 & \textbf{1.15} \\

\bottomrule
    \end{tabular}
    \caption{A selection of  most similar noun pairs and their adjectives.}
    \label{tab:instanceswithsfdelta}
\end{table}

\section{Analysis of the Results}

BERT (Bidirectional Encoder Representations from Transformers) \cite{Devlin2019} is a language encoder that is based on the Transformer architecture \cite{Vaswani2017}.
Given a sequence of tokens $(x_1, x_2, \dots, x_n)$, BERT encodes each token with a vector, resulting in a sequence of embedding vectors $(\mathbf{x}_1, \mathbf{x}_2, \dots, \mathbf{x}_n)$. 
The embedding vectors are then fed into an transformer encoder which is a stack of multi-head self-attention layers to produce a sequence of  embeddings vectors $(\mathbf{y}_1, \mathbf{y}_2, \dots, \mathbf{y}_n)$. The self-attention layers allow information to flow between any two positions in the input sequence, thereby modifying the embedding vectors to capture the context of the input sequence.
Thus the embedding vectors are considered to be  \emph{contextualised}, rather than \emph{static} as in word2vec \cite{mikolov2013}.
See Figure \ref{fig:bertflow} for a high level overview of the BERT architecture. In this section we present a geometric interpretation of the predictions of BERT, so that we can relate the factors involved in these predictions to the parameters that affect  contextuality.

\subsection{BERT logit score and the $\epsilon$ parameter of empirical tables}
One of the two tasks that BERT was trained on was masked language modelling. 
In this task,  a fraction of the input tokens are masked and the model is trained to predict the masked tokens. For this purpose, a further feedforward layer was added on top of the stack of self-attention layers to produce a sequence of output vectors $(\mathbf{p}_1, \mathbf{p}_2, \dots, \mathbf{p}_n)$, one for each token in the input sequence.
Suppose that the i-th token is masked. To obtain the predicted distribution of tokens on the i-th token, the corresponding output vector $\mathbf{p}_i$ is compared against the embedding vector of all candidate tokens and a \emph{logit} score is produced for each token.
A softmax function is then applied to the logit scores to obtain the probability distribution over the vocabulary.
More precisely, the logit score $l_j$ and probabilty $P_j$ of the j-th candidate token are given by:

\noindent\begin{minipage}{0.4\textwidth}
\begin{align}
    l_j = \mathbf{p}_i \cdot \mathbf{e}_j + b_j
    \label{eq:logit}
\end{align}
    \end{minipage}%
    \begin{minipage}{0.2\textwidth}\centering
    and 
    \end{minipage}%
    \begin{minipage}{0.4\textwidth}
\begin{eqnarray}
    P_j &=& \frac{\exp{l_j}}{\sum_{k=1}^{|V|} \exp{l_k}},
\label{eq:softmax}
\end{eqnarray}
\end{minipage}\vskip1em
\noindent where $b_j$ is a token-specific bias, $\mathbf{e}_j$ is the embedding vector of the j-th candidate token, and $\mathbf{p}_i$ is the output vector of the masked i-th token.

In the above, $|V|$ is the size of the vocabulary.
In our case, the vocabulary comprises of our  two outcomes, i.e.\ the two nouns in the PR-anaphora schema. Using equations \ref{eq:logit} and \ref{eq:softmax}, below in Prop. \ref{prop:main} we  prove a result which connects the BERT logit scores to the empirical table of the PR-like model descirbing the PR-anaphora schema:

\vspace{0.4cm}
\begin{tabular}{r|ccccc}
    & $(O_1, O_1)$ & $(O_1, O_2)$ & $(O_2, O_1)$ & $(O_2, O_2)$   \\ \hline
    $(X_1, X_2)$ & $\mathbf{p}_{X_1} \cdot \mathbf{e}_{O_1} + b_{O_1}$ & 0 & 0 & $\mathbf{p}_{X_1} \cdot \mathbf{e}_{O_2} + b_{O_2}$  \\
    $(X_2, X_3)$ & $\mathbf{p}_{X_2} \cdot \mathbf{e}_{O_1} + b_{O_1}$ & 0 & 0 & $\mathbf{p}_{X_2} \cdot \mathbf{e}_{O_2} + b_{O_2}$  \\
    $(X_3, X_1)$ & 0 & $\mathbf{p}_{X_3} \cdot \mathbf{e}_{O_1} + b_{O_1}$ & $\mathbf{p}_{X_3} \cdot \mathbf{e}_{O_2} + b_{O_2}$ & 0
\end{tabular}

\vspace{0.4cm}
\noindent
Here,  the logit scores are shown instead of probabilities for clarity. The probabilities are obtained by feeding the logit scores into the softmax function per row.

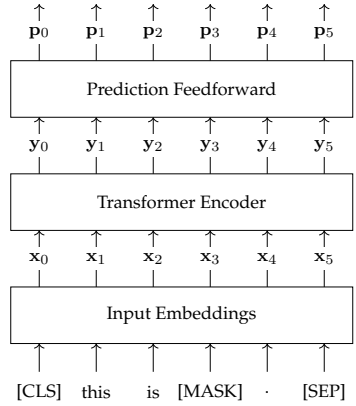
\begin{figure}[t!]
    \centering
    \scalebox{.75}
{\begin{tikzpicture}[node distance=1cm and 0.5cm, auto]
    \def\arrowlen{1}
    \def\boxheight{1.0}
    \def\boxwidth{6}
    \def\xmid{2.5}
    \foreach \value/\text [count=\i from 0] in {0/{[CLS]},1/this,2/is,3/{[MASK]},4/.,5/{[SEP]}} {
        \node[below, minimum height=0.7cm] at (\i, \arrowlen/2) {\text};
        \draw[->] (\i, \arrowlen/2) -- (\i, \arrowlen);
        \draw[->] (\i, \arrowlen+\boxheight) -- (\i, 2*\arrowlen+\boxheight);
        \draw[->] (\i, 2*\arrowlen+2*\boxheight) -- (\i, 3*\arrowlen+2*\boxheight);
        \draw[->] (\i, 3*\arrowlen+3*\boxheight) -- (\i, 4*\arrowlen+3*\boxheight);
        \node[fill=white, inner sep=0.1cm] at (\i, 1.5*\arrowlen+\boxheight) {\(\mathbf{x}_{\value}\)};
        \node[fill=white, inner sep=0.1cm] at (\i, 2.5*\arrowlen+2*\boxheight) {\(\mathbf{y}_{\value}\)};
        \node[fill=white, inner sep=0.1cm] at (\i, 3.5*\arrowlen+3*\boxheight) {\(\mathbf{p}_{\value}\)};
    }
    
    \node[draw, minimum width=\boxwidth cm, minimum height=\boxheight cm] at (\xmid, \arrowlen+\boxheight/2) {Input Embeddings};
    \node[draw, minimum width=\boxwidth cm, minimum height=\boxheight cm] at (\xmid, 2*\arrowlen+3*\boxheight/2) {Transformer Encoder};
    \node[draw, minimum width=\boxwidth cm, minimum height=\boxheight cm] at (\xmid, 3*\arrowlen+5*\boxheight/2) {Prediction Feedforward};
\end{tikzpicture}}
\caption{A flow chart illustrating how the embedding vectors are transformed into the output vectors in a BERT model. Extra tokens [CLS] and [SEP] are added to the input sequence to indicate the start and end of the sequence, while the [MASK] token is used to indicate the mask.}
\label{fig:bertflow}
\end{figure}

\begin{figure}[b!]
    \centering
\begin{tikzpicture}[scale=1]

    \def\xmin{-1}
    \def\xmax{3}
    \def\ymin{-1}
    \def\ymax{3}
    \def\kx{1}
    \def\ky{2}
    \def\b{2}

    \draw[->] (\xmin,0) -- (\xmax,0) node[right] {};
    \draw[->] (0,\ymin) -- (0,\ymax) node[above] {};
    

    \coordinate (lineleft) at (\xmin,{-\kx/\ky*\xmin + \b/\ky});
    \coordinate (lineright) at (\xmax,{-\kx/\ky*\xmax + \b/\ky});
    \draw[thick] (lineleft) -- (lineright) node[right] {$\mathbf{p} \cdot \Delta \mathbf{x} + \Delta b = 0$};

    \pgfmathsetmacro{\normk}{sqrt(\kx*\kx + \ky*\ky)}
    \pgfmathsetmacro{\d}{abs(\b) / \normk}
    \pgfmathsetmacro{\dx}{\d * \kx / \normk}
    \pgfmathsetmacro{\dy}{\d * \ky / \normk}

    \coordinate (p1) at (165:1.5);
    \coordinate (p2) at (25:4);
    \coordinate (p3) at (50:2);
    
    
    \draw[->,thick] (0,0) -- (p1) node[anchor=south east] {$\mathbf{p}_1$};
    \draw[->,thick, red] (0,0) -- (p2) node[anchor=south east] {$\mathbf{p}_2$};
    \draw[->,thick] (0,0) -- (p3) node[anchor=south east] {$\mathbf{p}_3$};
    
    \draw[dashed] (p1) -- ($(lineleft)!(p1)!(lineright)$) node[midway, right] {$\frac{\Delta l_1}{\| \Delta \mathbf{x} \|}$};
    \draw[dashed] (p2) -- ($(lineleft)!(p2)!(lineright)$) node[midway, right] {$\frac{\Delta l_2}{\| \Delta \mathbf{x} \|}$};
    \draw[dashed] (p3) -- ($(lineleft)!(p3)!(lineright)$) node[midway, right] {$\frac{\Delta l_3}{\| \Delta \mathbf{x} \|}$};
\end{tikzpicture}
\caption{A 2-dimensional sketch of a geometric interpretation of the mask predictions from BERT for the PR-anaphora schema. The vectors $\mathbf{p}_i$ are the output vectors of the masked token for the $i$-th context in the schema.
The distance from a predictor vector $\mathbf{p}_i$ to the hyperplane defined by the equation $\mathbf{p} \cdot \Delta \mathbf{x} + \Delta b = 0$ coincides with $\Delta l_i / \|\Delta \mathbf{x}\|$. 
As $\epsilon_i$ relates to $\Delta l_i$ monotonically, specifically $\epsilon_i = \tanh(\Delta l_i / 2)$, the signalling fraction $\SF = \max |\epsilon_i|$ depends only on the prediction vectors furthest away from the hyperplane. In the figure, the prediction vector $\mathbf{p}_2$ (coloured red) is the furthest away from the hyperplane.
}
\label{fig:hyperplane}
\end{figure}
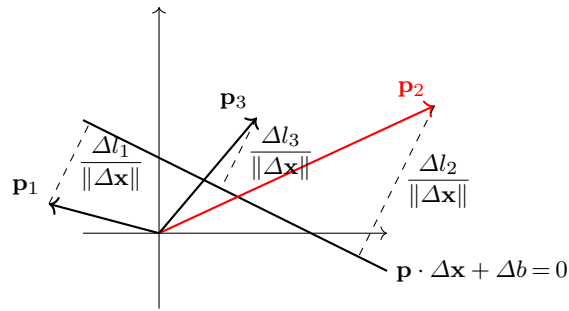

\begin{Proposition}\label{prop:main}
   The logit scores of the masked token given by BERT relates to the $\epsilon$ parametrisation of the PR-like model as follows: 
    \begin{align}
        \epsilon = \tanh\left(\frac{1}{2}(\mathbf{p} \cdot \Delta \mathbf{x} + \Delta b)\right).
        \label{eq:epsilon}
    \end{align}
where $\mathbf{p}$ is the output vector of the masked token, $\Delta \mathbf{x} = \mathbf{e}_{O_1} - \mathbf{e}_{O_2}$ is the difference between the embedding vectors of the two nouns $O_1$ and $O_2$, and $\Delta b = b_{O_1} - b_{O_2}$ is the difference between the bias terms of the two nouns in the masked modelling prediction head of BERT. 
\end{Proposition}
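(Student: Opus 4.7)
The plan is to trace the definition of $\epsilon$ backwards from the PR-like empirical table to the BERT logit scores, using only the softmax normalisation described in Equation (\ref{eq:softmax}) and the logit formula in Equation (\ref{eq:logit}).

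First, I would fix a row of the empirical table, say the one associated with context $(X_1, X_2)$, and read off the parametrisation: the non-zero entries of that row are $\tfrac{1+\epsilon}{2}$ (for the joint outcome $(O_1,O_1)$) and $\tfrac{1-\epsilon}{2}$ (for $(O_2,O_2)$), so that $\epsilon$ is exactly the difference $P(O_1) - P(O_2)$ between the two normalised probabilities assigned to the masked noun in that context. Because the normalisation procedure described just before the proposition is equivalent to restricting the vocabulary to $\{O_1,O_2\}$, these two probabilities are precisely the softmax outputs over the two candidate logits $l_{O_1} = \mathbf{p}\cdot \mathbf{e}_{O_1} + b_{O_1}$ and $l_{O_2} = \mathbf{p}\cdot \mathbf{e}_{O_2} + b_{O_2}$.

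Next I would perform the algebraic simplification. Substituting the two softmax expressions and writing $\Delta l = l_{O_1} - l_{O_2}$, the probability difference takes the form
\begin{equation*}
\epsilon \;=\; \frac{e^{l_{O_1}} - e^{l_{O_2}}}{e^{l_{O_1}} + e^{l_{O_2}}} \;=\; \frac{e^{\Delta l/2} - e^{-\Delta l/2}}{e^{\Delta l/2} + e^{-\Delta l/2}} \;=\; \tanh\!\bigl(\tfrac{1}{2}\Delta l\bigr),
\end{equation*}
after dividing numerator and denominator by $e^{(l_{O_1}+l_{O_2})/2}$. Finally, substituting the explicit form of each logit from Equation (\ref{eq:logit}) yields
\begin{equation*}
\Delta l \;=\; \mathbf{p}\cdot(\mathbf{e}_{O_1} - \mathbf{e}_{O_2}) + (b_{O_1} - b_{O_2}) \;=\; \mathbf{p}\cdot \Delta \mathbf{x} + \Delta b,
\end{equation*}
which when inserted into the tanh expression gives the claimed identity. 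The same argument applies verbatim for the rows $(X_2, X_3)$ and $(X_3, X_1)$, noting that for the anti-correlated context the roles of $O_1$ and $O_2$ are swapped in the support, but the parametrisation is arranged so that $\epsilon$ still denotes the probability difference for the masked token, so the derivation goes through unchanged.

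There is no substantive obstacle here; the result is essentially a bookkeeping exercise that makes explicit the identification between the binary-softmax function and the hyperbolic tangent. The only subtle point is ensuring that the sign conventions in the PR-like parametrisation are consistent with the choice of which noun is labelled $O_1$ in each row, which I would handle by stating the convention once at the start (that $\epsilon$ always refers to the probability of the masked token being $O_1$ minus that of $O_2$) and verifying that all three contexts are compatible with it.
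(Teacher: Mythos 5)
Your proposal is correct and follows essentially the same route as the paper: identify $\epsilon$ with $P_{O_1}-P_{O_2}$ under the two-candidate softmax, reduce to the logit difference, and substitute Equation (\ref{eq:logit}); the paper merely phrases the middle step via the ratio $P_{O_1}/P_{O_2}=e^{l_{O_1}-l_{O_2}}$ and the identity $\tanh^{-1}(x)=\tfrac{1}{2}\log\tfrac{1+x}{1-x}$, which is algebraically equivalent to your direct $\tanh$ computation.
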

\begin{proof}
    Recall the logit scores for the two outcomes are given by: 
    \begin{align}
        l_{O_1} = \mathbf{p} \cdot \mathbf{e}_{O_1} + b_{O_1} \text{\quad and \quad}
        l_{O_2} = \mathbf{p} \cdot \mathbf{e}_{O_2} + b_{O_2}.
    \end{align}
    Since the probabilities are obtained by taking the softmax of the logit scores, the ratio of the probabilities is given by:
    \begin{align}
        \frac{P_{O_1}}{P_{O_2}} = \frac{e^{l_{O_1}}}{e^{l_{O_2}}} = e^{l_{O_1} - l_{O_2}}.
    \end{align}
    By definition, $P_{O_1} = (1+\epsilon)/2$ and $P_{O_2} = (1-\epsilon)/2$. Thus we have:
    \begin{align}
        \log \frac{1+\epsilon}{1-\epsilon} = l_{O_1} - l_{O_2} = \mathbf{p} \cdot \Delta \mathbf{x} + \Delta b.
    \end{align}
    Further using the fact that $\tanh^{-1}(x) = \frac{1}{2} \log \frac{1+x}{1-x}$, we obtain the desired result.
\end{proof}

Note that the $\tanh$ function is montonically increasing. 
Therefore we can use the difference in logit scores given below as a proxy for the the value of $\epsilon$. 
\begin{equation}
    \Delta l := \mathbf{p} \cdot \Delta \mathbf{x} + \Delta b 
    \label{eq:deltal}
\end{equation}
The value $\Delta l$ can be interpreted as $\|\Delta \mathbf{x}\|$ times the distance from the prediction vector $\mathbf{p}$ to the hyperplane defined by the equation $\mathbf{p} \cdot \Delta \mathbf{x} + \Delta b = 0$. 
\noindent
A visualisation of this geometric interpretation is shown in Figure \ref{fig:hyperplane}.
Assuming an isotropic distribution of the prediction vectors $\mathbf{p}$, Equation \ref{eq:deltal} suggests that the value of $\Delta l$ is directly proportional to the Euclidean distance between the embedding vectors of the two nouns $\| \Delta \mathbf{x} \|$, which in turn non-linearly scales the value of $\epsilon$ through Equation \ref{eq:epsilon}.
Since a higher $\epsilon$ value implies less contextuality in both the sheaf and CbD frameworks, we expect that the value of $\| \Delta \mathbf{x} \|$ plays an important role in determining whether a model is contextual.
%
%
Equation \ref{eq:deltal} can be thought as a hyperplane in the word embedding space of BERT that allows a geometric interpretation of the predictions of BERT for the PR-anaphora schema which is shown in Figure \ref{fig:hyperplane}.
The bias difference $\Delta b$ serves to offset the hyperplane from the origin, while the difference in embedding vectors $\Delta \mathbf{x}$ determines the orientation of the hyperplane. 

\subsection{Factors affecting contextuality}
 \label{eq:deltal}
\begin{table}[t!]
    \centering
    \begin{tabular}{l|rrr|rrr}
    \toprule
        & \multicolumn{3}{c|}{sheaf} & \multicolumn{3}{c}{CbD} \\
    feautre & Kendall & Spearman & Pearson & Kendall & Spearman & Pearson \\
    \midrule
		nouns\_entropy & -0.0202 & -0.0302 & -0.0224 & -0.0125 & -0.0188 & -0.0140 \\
		adjectives\_entropy & -0.0214 & -0.0322 & -0.0257 & -0.0132 & -0.0198 & -0.0167 \\
        \midrule
		bert\_euclidean\_dist & \bf{0.0587} & \bf{0.0877} & \bf{0.0809} & \bf{0.0450} & \bf{0.0674} &\bf{ 0.0590} \\
		bert\_bias\_diff & 0.0334 & 0.0500 & 0.0123 & 0.0115 & 0.0173 & -0.0054 \\
        \bottomrule
    \end{tabular}
    \caption{Linear correlation coefficients between the features and the contextuality of the instances of the PR-anaphora schema, in the full dataset. }
    \label{tab:correlation}
\end{table}

\begin{table}[b!]
    \centering
    \begin{tabular}{lrr|rr}
    \toprule
        & \multicolumn{2}{c|}{signalling fraction} & \multicolumn{2}{c}{Delta} \\
    feautre & linear & cubic & linear & cubic \\
    \midrule
        nouns_entropy & 0.0005 & 0.0006 & 0.0002 & 0.0002 \\
        adjectives_entropy & 0.0007 & 0.0007 & 0.0003 & 0.0004 \\
        \midrule
        bert_euclidean_dist & \bf{0.0065} & \bf{0.0092} &\bf{0.0035} & \bf{0.0050} \\
        bert_bias_diff & 0.0001 & 0.0003 & 0.0000 & 0.0001 \\
        \bottomrule
    \end{tabular}
    \caption{Comparison of the $R^2$ values of the linear and cubic regression models, in the full dataset.}
    \label{tab:regression}
\end{table}

In the previous section, we showed that the differences in  BERT's logit scores, i.e. $\Delta l$ can be used as a proxy for the values of $\epsilon$, which are used to compute the entries of  the empirical tables. Further, in Equation \ref{eq:deltal} we showed that $\Delta l$ is directly proportional to Euclidean distance between the vectors of the two nouns in the instances of the PR-Anaphora schema.  Although this finding relates Euclidean distance to contextuality, it does not rule out other features of either the vectors of the nouns or the nouns themselves that might affect it too.  For instance, Equation \ref{eq:deltal} also hosts the variable $\Delta b$, which is the difference between the bias terms of the vectors of the two nouns.  In this section, we are interested in finding out which one of these two is most correlated with contextuality.   Specifically, we compute the degree of correlation between three features of BERT's predicted noun vectors, as well as two other independent features of the nouns. Our goal is to investigate  which one of these features correlate best with contextuality.  For correlation, we compute Spearman, Pearson and Kendall degrees. For BERT features, we consider Euclidean distances and the difference between the biases of   vectors.    In order to compute these distances, we use the pre-trained BERT model \texttt{bert-base-uncased} provided by the HuggingFace Transformers library~\cite{huggingfacetransformers}. 


Computing the differences between  features of word vectors are not the only ways of measuring and  comparing the statistical information encoded in them. In fact, the general rule governing BERT is that it  chooses the word that has occurred most in the corpus. This is too rough of a feature to be used in our schema instances, since we need BERT to choose between the words with equal probability.  Here entropy can come to help.  Entropy  is an often used method when it comes to computing  the imbalances between word frequencies.  If two words have similar frequencies,  entropy will peak. On the other hand, if one has a low and the other a high  frequency, we will have a low entropy.    In order to find out whether entropy is related to contextuality, we compute degrees of correlation between both of our contextuality measures and the entropy of nouns and  adjectives.

\begin{table}[t!]
    \centering
    \begin{tabular}{l|rrr|rrr}
    \toprule
        & \multicolumn{3}{c|}{sheaf (SF < 1/6)} & \multicolumn{3}{c}{CbD (Delta < 2)} \\
    feautre & Kendall & Spearman & Pearson & Kendall & Spearman & Pearson \\
    \midrule
		nouns\_entropy & -0.0436 & -0.0654 & -0.0625 & -0.0394 & -0.0592 & -0.0541 \\
		adjectives\_entropy & -0.0227 & -0.0341 & -0.0362 & -0.0193 & -0.0288 & -0.0246 \\
        \midrule
		bert\_euclidean\_dist & \bf{0.1234} & \bf{0.1837} & \bf{0.1963} & \bf{0.1155} & \bf{0.1722} & \bf{0.1787} \\
		bert\_bias\_diff & -0.0821 & -0.1241 & -0.0996 & -0.0670 & -0.1005 & -0.0679 \\
        \bottomrule
    \end{tabular}
    \caption{Correlation coefficients between the features and the contextuality of the instances of the PR-anaphora schema, in the similar noun subset of the dataset.}
    \label{tab:correlationsim}
\end{table}

\begin{table}[b!]
    \centering
\begin{tabular}{l|rr|rr}
\toprule
& \multicolumn{2}{c|}{signalling fraction} & \multicolumn{2}{c}{Delta} \\
feautre & linear & cubic & linear & cubic \\
\midrule
nouns_entropy & 0.0001 & 0.0003 & 0.0003 & 0.0004\\
adjectives_entropy & 0.0012 & 0.0018 & 0.0024 & 0.0027\\
\midrule
bert_euclidean_dist & \textbf{0.0779} & \textbf{0.0803} & \textbf{0.0573} & \textbf{0.0581}\\
bert_bias_diff & 0.0036 & 0.0109 & 0.0020 & 0.0059\\
\bottomrule
\end{tabular}
    \caption{Comparison of the $R^2$ values of the linear and cubic regression models,  in the similar noun subset of the dataset.}
    \label{tab:regressionsim}
\end{table}

Let us first consider the full dataset. In this dataset, the  correlation scores  between each of the above features and both of our contexutaliy measures, i.e. CbD's delta and sheaf theory's SF,  are shown in Table \ref{tab:correlation}. The highest correlations for any of the correlation scores are with Euclidean distance. BERT's bias differences provided the second best set of correlations, although they were much lower that Euclidean distance. Finally, entropies resulted in negative correlation for all of the correlation measures and both SF and delta. 

\begin{figure}[t!]
    \centering
    \includegraphics[width=0.49\textwidth]{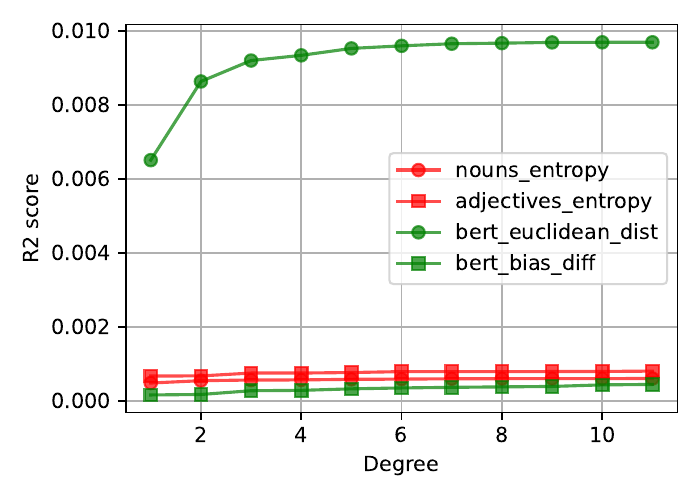}
    \includegraphics[width=0.49\textwidth]{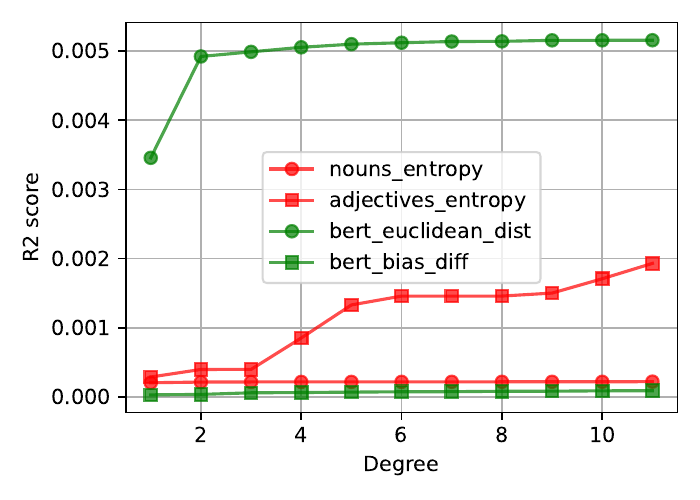}
    \caption{The $R^2$ scores of the polynomial regression models at different polynomail degrees predicting (left) the signalling fraction and (right) the direct influence.}
    \label{LOST}
\end{figure}

The above correlations are all  statistically significant (p-values are all below 0.01),  but on the  low side (below 0.005).   This indicates the presence of a non linear correlation. To test this, we  trained a polynomial logistic regression model, on a range of degrees from 2 to 10. We chose the cubic degree polynomial as a cut off point.  The $R^2$ values for linear vs cubic correlations  are shown in  Table \ref{tab:regression}.  The results for all the 10 degrees are plotted in Figure \ref{LOST}. Clearly, there is   a 2-3 times increase in the correlations of the cubic models in comparison to the linear one. Again, the highest correlation was with Euclidean distance.  Since $R^2$ is the square of Pearson's correlation, all the values are positive. Naturally, with the $R^2$, the entropies provided better correlations than BERT bias differences, but both of these were still quite low. This shows that geometric distances play an important role when it comes to predicting contextuality. Understanding this fact requires a more in depth study of both settings.

The similar noun subset led to very similar results, see  Table \ref{tab:correlationsim}. Here, again we observe  (1) an increase in the cubic regression correlations in comparison to the linear ones, and (2) Euclidean distance provides the  highest correlation  with both SF and delta. These provide further experimental evidence that   the Euclidean distances between BERT's word vectors  are the best statistical predictors of  degrees of  contextuality.

\section{Discussion and Conclusion}

\begin{figure}[t!]
    \includegraphics[width=0.5\textwidth]{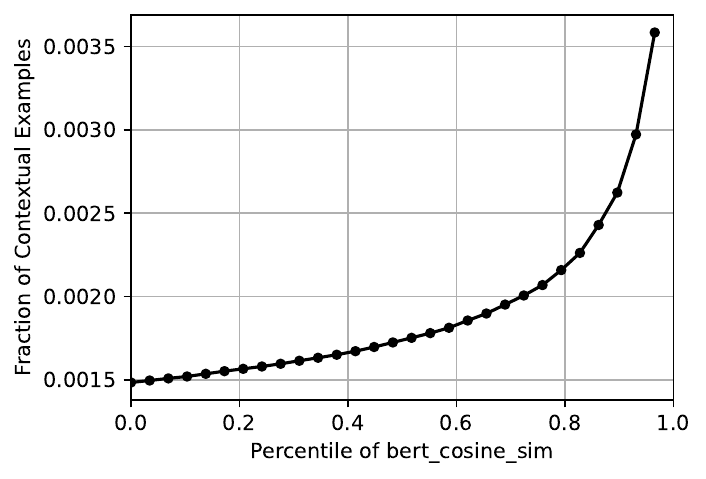}
    \includegraphics[width=0.5\textwidth]{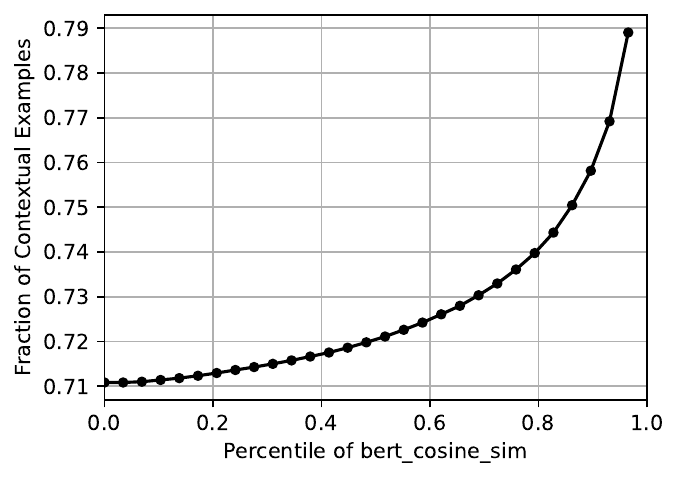}
    \caption{
        The fraction of sheaf-contextual (left) and CbD-contextual (right) instances at different subsets of the dataset created by considering the most similar noun pairs at different percentile thresholds.
    }
    \label{fig:simthreshold}
\end{figure}

In this paper, we set to find out whether quantum contextuality can occur in natural language. We built a linguistic schema and modelled it over a quantum contextual scenario. We then instantiated this schema using the available snapshot of  the Simple English Wikipedia. Probability distributions of the instances were collected using the masked word prediction capability of the large language model BERT. Since natural language data is signalling, one should work in more general frameworks, such as the Contextuality-by-Default (CbD)  and the signalling corrected edition of the sheaf theoretic model of contextuality. Computing degrees of contextuality in either of these frameworks led to the discovery of many CbD and sheaf theoretic contextual instances. In order to investigate the reason behind this discovery, we worked with features of BERT's predicted vectors and degrees of contextuality, and derived an equation between the two. More specifically,  we showed that the differences in  BERT's logit scores  can be used as a proxy for the values of $\epsilon$. The former   is directly proportional to Euclidean distance,  and the latter is used to compute the entries of  the empirical tables.  We then trained a polynomial  regression model and computed  the correlations between degrees of (CbD and sheaf theoretic) contextuality and BERT features. This  provided further experimental support for our theoretical result, that  Euclidean distance offers the most predictive power for both CbD and sheaf theoretic contextuality. 

The percentage of contextual instances and  the degrees of correlation with contextuality were much higher in the similar noun subset of the dataset in comparison to the full dataset. Figure \ref{fig:simthreshold} plots the $R^2$ for different similarity thresholds for the full dataset. This plot shows that the number of contextual instances increases as we increase the similarity thresholds. These results match the ones we obtained previously, ~\cite{Lo2022,Lo2023}, where we had a  much smaller dataset,  constituting of only 11 pairs of nouns and 11,052 empirical models (vs  866,108 pairs of nouns and 51,966,480 empirical tables of the current paper).  The noun pairs of that dataset were  chosen to be highly similar, e.g. \ (cat, dog), (girl, boy) and (man, woman).  Working with highly similar noun pairs led to 350 sheaf-contextual (3.1$\%$) and 9,321 CbD-contextual (84$\%$) models. We conjecture that the higher percentages of contextuality among similar nouns   is because Euclidean and cosine distances are related to each other. 
Since the cosine similarity is related to the Euclidean distance via the equation $\| u - v \|^2 = \|u\|^2 + \|v\|^2 + \|u\| \|v\| \cos(u,v)$,  if the lengths of the vectors are roughly the same, then the cosine similarity (anti)-correlates with the Euclidean distance with to a high degree. In the similar noun subset the mean and standard deviation of the lengths of the noun embeddings are 1.19 and 0.14, and the said correlation is 0.60.

The fact that there are overwhelmingly more CbD-contextual models than sheaf-contextual models in our results is intriguing and raises many questions.
This discrepancy highlights fundamental differences in the criteria for contextuality between these two frameworks.
The contextual bounds in the ($\Delta$, $\SF$) space of empirical models are orthogonal to each other, as depicted in Figure \ref{fig:sfdeltaheatmap}.
This orthogonality suggests that the two frameworks are capturing different aspects of contextuality.
In the sheaf framework, when a signalling model is contextual, any ontological (i.e.\ hidden variable) explanation of the empirical data must be more signalling than the observed data.
In the CbD framework, whenever it is impossible to ``glue'' the empirical distributions together, the approach is to take a maximal coupling whose interpretation is less clear.
These differences underscore the complexity of contextuality in signalling data and exploring their significants in nautral language data and tasks is left for future work.



To conclude, we  demonstrated that  a variant of quantum contextuality can be observed in natural language data. Contextuality leads to quantum advantage.  It remains to show   whether quantum-like contextuality also leads to advantage, and if so what kind of advantage will it be and how can it be obtained. Finding answers to these questions is a  future direction.  In the meantime, one also needs to substantiate how this potential advantage can be used in improving methods that natural language tasks. 
Our linguistic schema is closely related to a well known coreference resolution task known as the Winograd Schema Challenge (WSC)~\cite{Levesque2012}. 
WSC was proposed as a benchmark for measuring machine intelligence.
The idea behind it is that solving the task requires common sense and access to external knowledge, which humans have but machines do not. 
In previous work, we showed that the measurement scenario of the original WSC is too simple to host contextuality~\cite{Lo2023GenWin}. 
The schema presented in this paper offers a suitable generalisation of it. 
In this paper, we showed how machines, i.e.\ the large language model BERT can be used to solve it. It remains to collect human judgements and compare their performances. 

An advantage of transformer-encoder models with bidirectional attentions such as BERT over the state of the art decoder models such as the GPTs is that its encoder architecture allows masked language modelling, which is a crucial tool for obtaining probability distributions for the instances of our linguistic schema.
It could be possible to use GPTs for our purpose with a carefully designed prompt, which is a future direction.

Coreference ambiguity has given rise to other historical challenges in Linguistics and Computational Linguistics. 
The most difficult cases arise when pronouns are used together with  quantifiers and indefinites. 
The term ``donkey anaphora'' is used to denote a family of such challenges, defying compositionality and posing many challenges to existing formal models of syntax and semantics. Donkey anaphora have been treated using sheaves \cite{AbramskySadr2014}. Collecting data for these examples using large language models and investigating whether they can host quantum contextuality is another future direction.  

\section{Supplementary Material}
The code for the experiments in this paper can be found in the following GitHub repository: \url{https://github.com/kinianlo/Contextuality-in-LLM}.

\begin{enumerate}
\item The code for adjective-noun phrases extraction from the Simple English Wikipedia can be found in the file \texttt{adjective\_noun\_phrases.py}.
\item The code for constructing instances of the PR-anaphora schema can be found in the file \texttt{construct\_pr\_anaphora.py}.
\item The code for calculating contextuality measures and features can be found in the file \texttt{contextuality\_measures.py}.
\item The code for evaluating the correlation between the features and contextuality can be found in the file \texttt{correlation\_analysis.py}.
\item Datasets used in the paper can be found in the \texttt{data} directory.
\end{enumerate}

\bibliographystyle{RS}
\bibliography{references.bib}
\end{document}